\renewcommand{\b}[1]{\boldsymbol{#1}}
\newcommand\h[1]{#1}
\newcommand\hu[1]{\textcolor{black}{#1}}
\newcommand{\figuretag}[1]{%
  \addtocounter{figure}{-1}%
  \renewcommand{\thefigure}{#1}%
}
\newcommand{\DG}[1]{{\textcolor{black}{{#1}}}}
\newtheorem{theorem}{Theorem}
\newtheorem{assumption}{Assumption}
\newtheoremstyle{remark_custom}
    {0} 
    {0} 
    {} 
    {} 
    {\it} 
    {.} 
    {.5em} 
    {} 
\theoremstyle{remark_custom}
\newtheorem{remark}{Remark}
\begin{document}

\title{\LARGE \bf
Scenario-Based \h{Trajectory Optimization} in\\ Uncertain Dynamic Environments
}

\author{Oscar de Groot$^{*}$, Bruno Brito$^{*}$, Laura Ferranti$^{*}$, Dariu Gavrila$^{*}$ and Javier Alonso-Mora$^{*}$
\thanks{$^*$The authors are with the \DG{Dept.} of Cognitive Robotics, TU Delft, 2628 CD Delft, The Netherlands. {\DG{\texttt {Email: o.m.degroot@tudelft.nl}}}}
\thanks{{This work received support from the Dutch Science Foundation \DG{NWO-TTW}, within the SafeVRU project (nr. 14667) and Veni award (nr. 15916).}}
}

\maketitle

\begin{abstract}
We present an optimization-based method to plan the motion of an autonomous robot under the uncertainties associated with dynamic obstacles, such as humans. Our method bounds the marginal risk of collisions at each point in time by incorporating chance constraints into the planning problem. This problem is not suitable for online optimization outright for arbitrary probability distributions. Hence, we sample from these chance constraints \hu{using an uncertainty model,} to generate "scenarios", which translate the probabilistic constraints into deterministic ones. In practice, each scenario represents the collision constraint for a dynamic obstacle at the location of the sample. The number of theoretically required scenarios can be very large. Nevertheless, by exploiting the geometry of the workspace, we show how to prune most scenarios before optimization and we demonstrate how the reduced scenarios can still provide probabilistic guarantees on the safety of the motion plan. Since our approach is scenario based, we are able to handle arbitrary uncertainty distributions. We apply our method in a Model Predictive Contouring Control framework and demonstrate its benefits in simulations and experiments with a moving robot platform navigating among pedestrians, running in real-time.
\end{abstract}

\section{INTRODUCTION} 
Mobile robots are 
\DG{increasingly} becoming part of our society, with applications in warehouses~\cite{simon_inside_2019}, automotive~\cite{walker_self-driving_2019}, maritime transportation~\cite{mi_news_network_7_2018}, etc. In all these \DG{domains}, it is \DG{essential} that the robots can safely operate in dynamic environments (e.g., near humans). However, uncertainty is omnipresent\h{, for example, in the future motion paths of the dynamic obstacles or in sensing (i.e., localization) errors.}
%
Our goal is to design a local robot motion planning \DG{algorithm} able to plan collision-free trajectories in the presence of possibly unbounded and arbitrary uncertainties.\looseness=-1 

Optimization-based motion planning methods avoid collisions by imposing constraints in the optimization problem. Classical methods consider deterministic obstacle predictions, that is, they do not account for the presence of uncertainties. When uncertainties come into the picture, deterministic frameworks fail to achieve safety, since they do not consider the possible spread of outcomes. In the case of bounded uncertainties, that is, if the probability density function is non-zero in a bounded domain of the robot's workspace and is zero elsewhere, then it is possible to set the acceptable level of risk to zero. This approach is referred to as \emph{robust optimization}. On the one hand, this approach allows for the addition of uncertainties in the deterministic framework. On the other hand, the assumption that the distribution is bounded can be limiting (\hu{e.g., when} obstacle predictions are Gaussian). Additionally, it becomes conservative when the domain of support is large. In the presence of unbounded uncertainties, \textit{chance constraint optimization} allows one to constrain the probability of collisions to be below an acceptable level of risk. \h{In this work, and likewise to \cite{zhu_chance-constrained_2019, schildbach_randomized_2013}, we consider the marginal probabilities of collision at each point in time. This is, we constrain the chance of collision for each step of the trajectory, separately.}

Directly evaluating these chance constraints is intractable, especially for arbitrary shapes of the distribution. Instead they are often either approximated (e.g., using particle filters \cite{blackmore_probabilistic_2010}) or bounded. \hu{Approximation techniques have} received most attention, due to their sample efficiency. \hu{However, the safety of these approaches cannot be guaranteed, especially when operating in unknown environments.}



\paragraph*{Contribution} In this work, we assume that a perception module provides predictions of the motion of dynamic obstacles together with a description of their (unbounded, possibly non Gaussian) uncertainty. \h{To provide probabilistic safety for each step of the planned trajectory with respect to the \hu{modeled} uncertainties,}
our work presents a novel \emph{probabilistic trajectory optimization} framework for motion planning in uncertain dynamic environments, that is, a Scenario-based Model Predictive Contouring Control (S-MPCC) design. Our S-MPCC builds on nonconvex scenario-optimization framework~\cite{campi_general_2018} and the model predictive contouring control (MPCC) design of~\cite{brito_model_2019}. \h{We show that in contrast with the general a posteriori results in}~\h{\cite{campi_general_2018}, we obtain the perceived risk of our motion plan \textit{before optimization}. The support subsample, which is the key indicator for the risk in \cite{campi_general_2018}, is obtained through the geometry of the problem, leading to efficient evaluation of the samples. While sampling-based chance constrained approaches are generally considered intractable for real-time motion planning, our method is competitive in terms of computation times with state-of-the-art planning methods, while applicable to generic uncertainties. The approach handles multiple obstacles and accounts for the size of the vehicle and obstacles.}

We show how our approach allows the robot to move continuously through its environment while reasoning about its probability of colliding with dynamic obstacles.
In our framework\h{, illustrated in Fig. \ref{fig:removal_gaussian}}, instead of directly solving the chance constrained motion planning problem, we solve an associated deterministic problem obtained as follows. \h{First, we apply a tailored linearization of the chance constraints, then we sample from the linearized chance constraints a large set of deterministic constraints, known as \emph{scenarios}. 
The number of scenarios drawn is linked with the associated risk of collisions. This allows us to reformulate the original planning problem in a deterministic one, known as a scenario program. Using this approach we 
effectively resolve the chance constraints in a preprocessing step.}

\h{Uncertainty of predictions \hu{is} generally non Gaussian and appears, for example, when Gaussian uncertainty is propagated through nonlinear dynamics. Our method is applicable to generic uncertainties. We demonstrate our framework for Gaussian and non Gaussian uncertainties using an autonomous ground robot, both in simulation and in experiments.}

\paragraph*{Related Work} 
Trajectory optimization with chance constraints for collision avoidance has previously been considered in the case of \emph{Gaussian uncertainties}. For example,~\cite{brito_model_2019} defined collision avoidance constraints in an MPC framework, by modeling the dynamic obstacles as ellipses. This representation allows the planner to accommodate Gaussian uncertainties (as the level set of a Gaussian distribution are ellipses) and solve a deterministic nonlinear optimization problem online. The approach was applied for autonomous driving in \cite{ferranti_safevru_2019}. In~\cite{zhu_chance-constrained_2019} the chance constraint problem is solved explicitly. Their approach linearizes the collision chance constraints, and uses the prior of Gaussian uncertainty to formulate deterministic constraints on the mean and covariance of the distribution.

Literature on motion planning for \textit{non Gaussian} uncertainty distributions is still limited. Inspired by particle filter approaches,~\cite{blackmore_probabilistic_2010} introduces a method to approximately evaluate the chance constraints using particles. Their method uses a relatively small sample size, but cannot provide any guarantees on the safety of the solution. In~\cite{berntorp_motion_2019}, a Rapidly-expanding Random Trees (RRT) algorithm is presented where the uncertainty is evaluated for each node in the tree. The dynamic obstacle trajectories are predicted using Gaussian Processes. 
In \cite{wang_non-gaussian_2020}, assuming full knowledge of the probability distribution, polynomial chance constraints are transformed to deterministic inequalities using the statistical moments of the non Gaussian distribution. 

Compared to the previous methods, our approach bounds the probability of collision using the scenario optimization framework. This framework is well established for convex optimization (\cite{calafiore_scenario_2006}, \cite{campi_exact_2008}, \cite{calafiore_random_2010}, \cite{campi_sampling-and-discarding_2011}, \cite{schildbach_randomized_2013}, \cite{schildbach_scenario_2014}). A framework for non-convex scenario optimization was recently introduced in \cite{campi_general_2018}. We rely on this framework and extend \h{it} in the context of robot motion planning.

\section{PROBLEM FORMULATION}\label{sec:problem_formulation}
We consider the motion planning problem of a mobile robot, whose dynamics can be represented by the following nonlinear discrete-time system:
\begin{equation}
    \b{x}_{k + 1} = f(\b{x}_k, \b{u}_k),
\end{equation}
where $\b{x}_k \in \mathbb{R}^n$ and $\b{u}_k \in \mathbb{R}^m$ denote the states and inputs, respectively. The robot can move within a workspace (e.g., the 2D plane when we consider ground robots).  In the workspace, the robot must avoid collisions with dynamic obstacles. 
We model the collision region of the robot $\mathcal{V}_k$ at time $k$ as the union of $n_c$ circles, and the collision region of the dynamic obstacles $\mathcal{D}^v_k$ at time $k$ as a single circle.  

The position of dynamic obstacles along the planning horizon of the robot is uncertain. \hu{We denote the uncertainty of the obstacles at stage $k$ with a tuple $(\Delta_k, \mathcal{D}_k, \mathbb{P}_{k,\textrm{real}})$, where $\Delta_k$ is a probability space equipped with a $\sigma$-algebra $\mathcal{D}_k$ and a probability measure $\mathbb{P}_{k,\textrm{real}}$.} We allow the probability spaces to be unbounded and non Gaussian. \hu{We assume that at each step a perception module provides the motion planner with an \hu{independent} model of the uncertainty, formalized as follows.}
\begin{assumption}\label{as:model}
\hu{The planner is provided with a model $\mathbb{P}_k$ of the real probability measure $\mathbb{P}_{k,\textrm{real}}$ for each $k$.}
\end{assumption}
\begin{assumption}\label{as:independent}
\hu{\h{Random variables $\bm{\delta}_j \hu{\sim \mathbb{P}_j}$ 
and $\bm{\delta}_l \hu{\sim \mathbb{P}_l}$}}
are independent for all \hu{stages} $j, l \in \{1, \hdots, N\}$, \h{where $j\neq l$.}
\end{assumption}
\h{Assumption \ref{as:independent} implies that the dependency induced, for example, by the dynamics of an obstacle, is handled by the perception module such that the uncertainties are independent as viewed from the perspective of the motion planner. The assumption is common in state-of-the-art perception modules, \h{for example}~\cite{chai_multipath_2019}, \cite{deo_multi-modal_2018}, \cite{kooij_context-based_2019}.} 

\h{Under the, possibly unbounded, uncertainty of the dynamic obstacles, we constrain the marginal probability of collision at each time step of the trajectory using \textit{chance constraints}, similarly to~\cite{zhu_chance-constrained_2019,schildbach_randomized_2013}. Each chance constraint is subject to an acceptable risk level $\epsilon_k$, which can be tuned accordingly. This implies that we cannot give a non-conservative bound on the collision risk of the full motion plan. However, by frequently recomputing the motion plan, for example in an MPC framework, the actions in the near future are probabilistically safe and risk in later stages is reconsidered when the robot moves closer.
We formulate the motion planning problem as follows:}\h{
\begin{subequations}
\label{eq:motion_planning_problem_general}
\begin{align}
    \min_{\b{u} \in \mathbb{U}} \qquad & \sum_{k = 1}^N J(\b{x}_k, \b{u}_k)\\
  \textrm{s.t.} \qquad & \b{x}_{k + 1} = f(\b{x}_k, \b{u}_k), \ \hu{\b{x} \in \mathbb{X}} \\
  &\hu{\mathbb{P}}\hu{_k} \left[||\b{x}^d_k - \bm{\delta}^v_k||_2 > r, \forall d, v \right] \geq 1 - \epsilon_k, \ \forall k,\label{eq:cc_marginal_probability}
\end{align}
\end{subequations}}
where $\b{u} = \{\b{u}_1, \hdots, \b{u}_N\} \in \mathbb{U}$ are the optimized system inputs subject to input constraints, $\bm{\delta}^v_k \in \Delta_k^v$ is the uncertain position of obstacle $v$ at stage $k$ and $J(\b{x}_k, \b{u}_k) \geq 0$ is the cost function specifying performance metrics. The radius $r$ is the sum of vehicle and obstacle radii. To simplify the notation, we assume this radius to be a constant. The chance constraint, \eqref{eq:cc_marginal_probability}, constrains the probability of collisions between each collision circle $d$ of the vehicle and the collision circle of each dynamic obstacle $v$ \h{at prediction step $k$} to be below the risk level $\epsilon_{\h{k}}$, as visualized in Fig.~\ref{fig:ccp}. \hu{The probability measure $\hu{\mathbb{P}}\hu{_k}$ refers to the modeled uncertainty.}

Problem~\eqref{eq:motion_planning_problem_general} is a chance constrained optimization problem. As discussed in Section~\ref{sec:proposed_approach}, to solve this problem, we rely on the nonconvex scenario optimization (NSO) framework of~\cite{campi_general_2018}, for which we provide an overview in the following section. \hu{This framework can in general provide a bound on the risk with respect to the unknown probability distribution $\mathbb{P}_{\textrm{real}}$, by sampling from the real system. In the real-time setting of this paper, however, collecting samples online is intractable. Instead we propose to sample from the model distribution $\mathbb{P}$, as defined in Assumption \ref{as:model}. For consistency of notation, the results of \cite{campi_general_2018} are presented here using the model $\mathbb{P}$.}


\section{NONCONVEX SCENARIO OPTIMIZATION}
\label{sec:preliminaries}
The NSO framework allows us to replace chance constraints with deterministic constraints by sampling. 
Consider the Chance Constrained Problem (\textbf{CCP})
\begin{subequations}
\label{eq:ccp}
\begin{align}
    \min_{\b{u} \in \mathbb{U}} \qquad & J(\b{u})\\
  \textrm{s.t.} \qquad & \mathbb{P}\left[g(\b{u}, \bm{\delta}) \leq 0\right] \geq 1 - \epsilon, \  \bm{\delta} \in \Delta,\label{eq:og_ccp}
\end{align}
\end{subequations}
where $\b{u}$ are decision variables, $\bm{\delta} \in \Delta$ is the realization of the uncertainty and the function \h{$g:\mathbb{X}\times \Delta \rightarrow \mathbb{R}$} is a nonlinear function associated with the nonconvex constraint $g(\b{x}, \bm{\delta})\leq 0$. The authors of \cite{campi_general_2018} established a link between CCP \eqref{eq:ccp} and the deterministic Scenario Program (\textbf{SP}):
\begin{subequations}
\begin{align} 
    \min_{\b{u} \in \mathbb{U}} \qquad & J(\b{u})\\
    \qquad \textrm{s.t.} \qquad & g(\b{u}, \bm{\delta}^i) \leq 0, \ \bm{\delta}^i \in \Delta, \ \forall i \in \mathcal{S}.\label{eq:sp_constraint}
\end{align}
\label{eq:scenario_full}\label{eq:sp}
\end{subequations}
\h{We denote its solution by $\b{u}_{SP}^*$.} Each of the $S$ constraints in \eqref{eq:sp_constraint} is constructed by drawing a sample \h{$\bm{\delta}^i$ from $\Delta$,} and formulating the constraint $g(\b{u}, \bm{\delta}^i)\leq 0$ in the scenario where the sample $\bm{\delta}^i$ is a realization of the uncertainty. Since each of the samples specifies a scenario, the samples themselves are called \textit{scenarios} and the constraints \eqref{eq:sp_constraint} are known as \textit{scenario constraints}. \h{The \textit{violation probability}, $V : \mathbb{U} \rightarrow [0, 1]$, given by
\begin{equation}
    V(\b{u}) \coloneqq \mathbb{P}\left[ \bm{\delta} \in \Delta : g(\b{u}, \bm{\delta}) > \b{0} \right],
\end{equation}}
\h{defines the probability that input $\b{u}$ \hu{violates} a newly observed scenario. The solution of the SP in \eqref{eq:sp} depends on randomly sampled scenarios and hence its violation probability is a random variable \hu{over the product probability measure, given by $\mathbb{P}^{\textrm{S}}$ = $\mathbb{P} \times \hdots \times \mathbb{P}$ (S times)}. To link the SP of \eqref{eq:sp} with the CCP of \eqref{eq:ccp}, we are therefore interested in bounding the probability that $V(\b{u}_{SP}^*)$ satisfies our risk bound $\epsilon$\hu{, a probability which we refer to as the confidence}.}
A key definition in this direction is the \textit{support subsample}.

\smallskip
\noindent\textbf{Definition~\cite{campi_general_2018}: }A \textit{support subsample} of an SP is a subset of scenarios $\mathcal{S}_{\textrm{support}} \subseteq \mathcal{S}$ that results in the same optimizer as the original SP. The cardinality of the support subsample, that is, the support subsample size, is denoted by \h{$s$. The smallest support subsample size is denoted by $s^*$.}

\smallskip
\noindent Theorem 1 in \cite{campi_general_2018} \h{provides the following \hu{confidence bound}}
\begin{equation}
   \h{\mathbb{P}^{\textrm{S}}[V(\b{u}_{SP}^*) > \epsilon(s^*)]} \leq \sum_{s = 0}^{S - 1} {S \choose s} \left[1 - \epsilon(s)\right]^{S- s} = \beta. \label{eq:nonconvex_relation}
\end{equation}
Here \h{$\epsilon(s) : \{0, \hdots, S\}\rightarrow [0, 1]$} can be designed subject to \eqref{eq:nonconvex_relation} and $\epsilon(S) = 1$, an example can be found in~\cite[Sec.~II]{campi_general_2018}. 
Equation \eqref{eq:nonconvex_relation} theoretically links the sampling size $S$, 
\hu{confidence parameter} $\beta$ \hu{(complement of the confidence)} and risk $\epsilon$, based on the observed support sample size. 
\hu{Notice that in this work, as a consequence of using model distribution $\mathbb{P}$, the bound \eqref{eq:nonconvex_relation} applies to the modeled uncertainty rather than the real robot, in contrast with \cite[Th. 1]{campi_general_2018}.} 


\begin{figure}[t]
    \centering
    \figuretag{1}
     \begin{subfigure}[b]{0.21\textwidth}
         \centering
         \includegraphics[width=\textwidth]{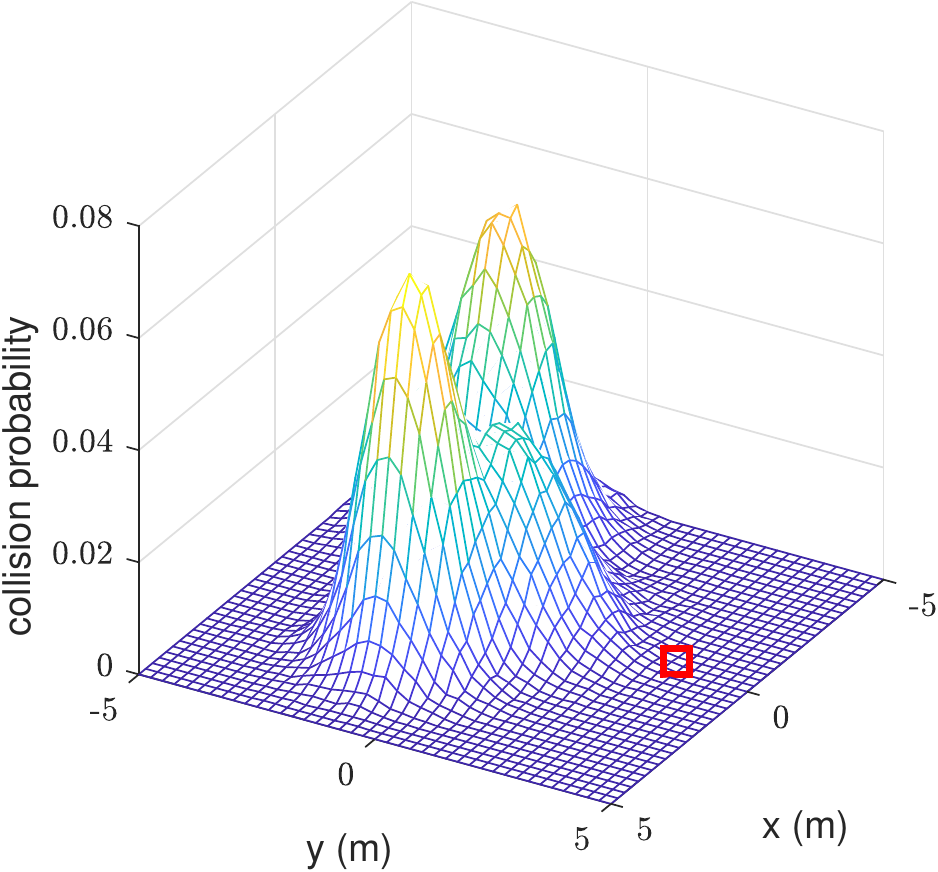}
         \caption{\h{Quadratic CCP \eqref{eq:cc_marginal_probability}}}
         \label{fig:ccp_quad_real}
     \end{subfigure}
     \
     \begin{subfigure}[b]{0.21\textwidth}
         \centering
         \includegraphics[width=\textwidth]{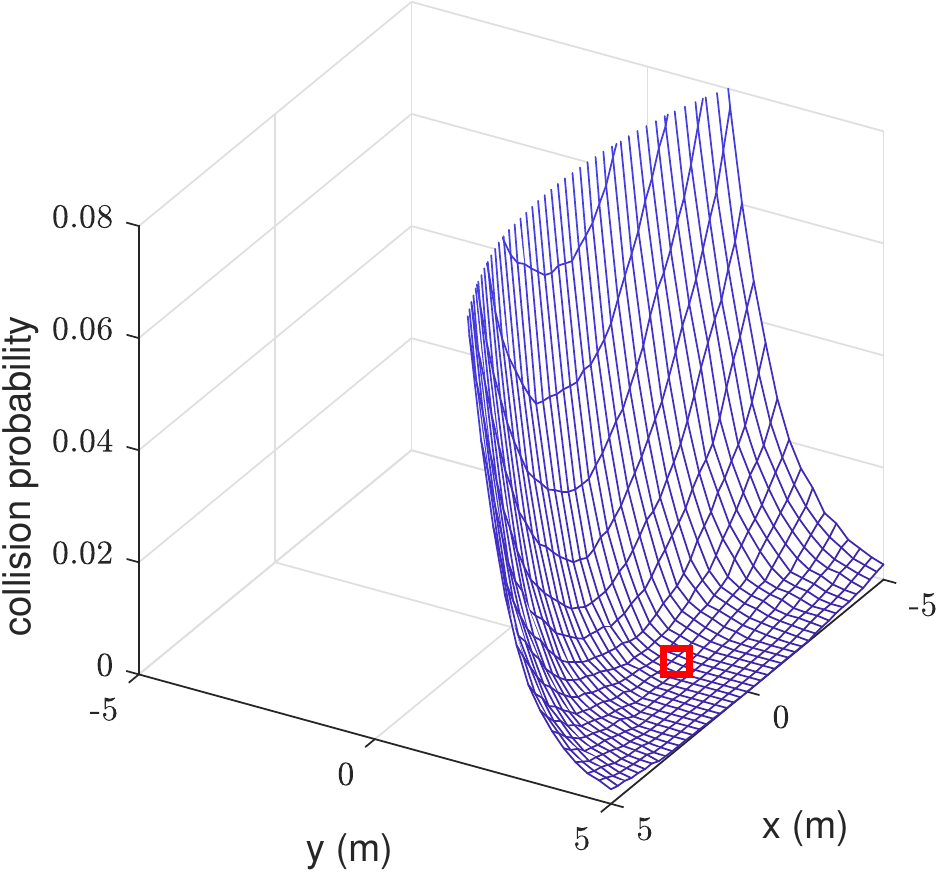}
         \caption{\h{Linearized CCP \eqref{eq:cc_linearized}}}
         \label{fig:ccp_quad}
     \end{subfigure}
        \caption{\h{Grid wise evaluation of the collision probability, with $r = 0.5$~m of chance constraints \eqref{eq:cc_marginal_probability} and \eqref{eq:cc_linearized} 
        in an example where $\bm{\delta}$ follows a Mixture-of-Gaussians (MoG) distribution. The red square denotes the linearization point.}}
        \label{fig:linearized_ccp}
\end{figure}
\begin{figure*}[h]
    \figuretag{2}
     \centering
     \begin{subfigure}[b]{0.19\textwidth}
         \centering
         \includegraphics[width=\textwidth]{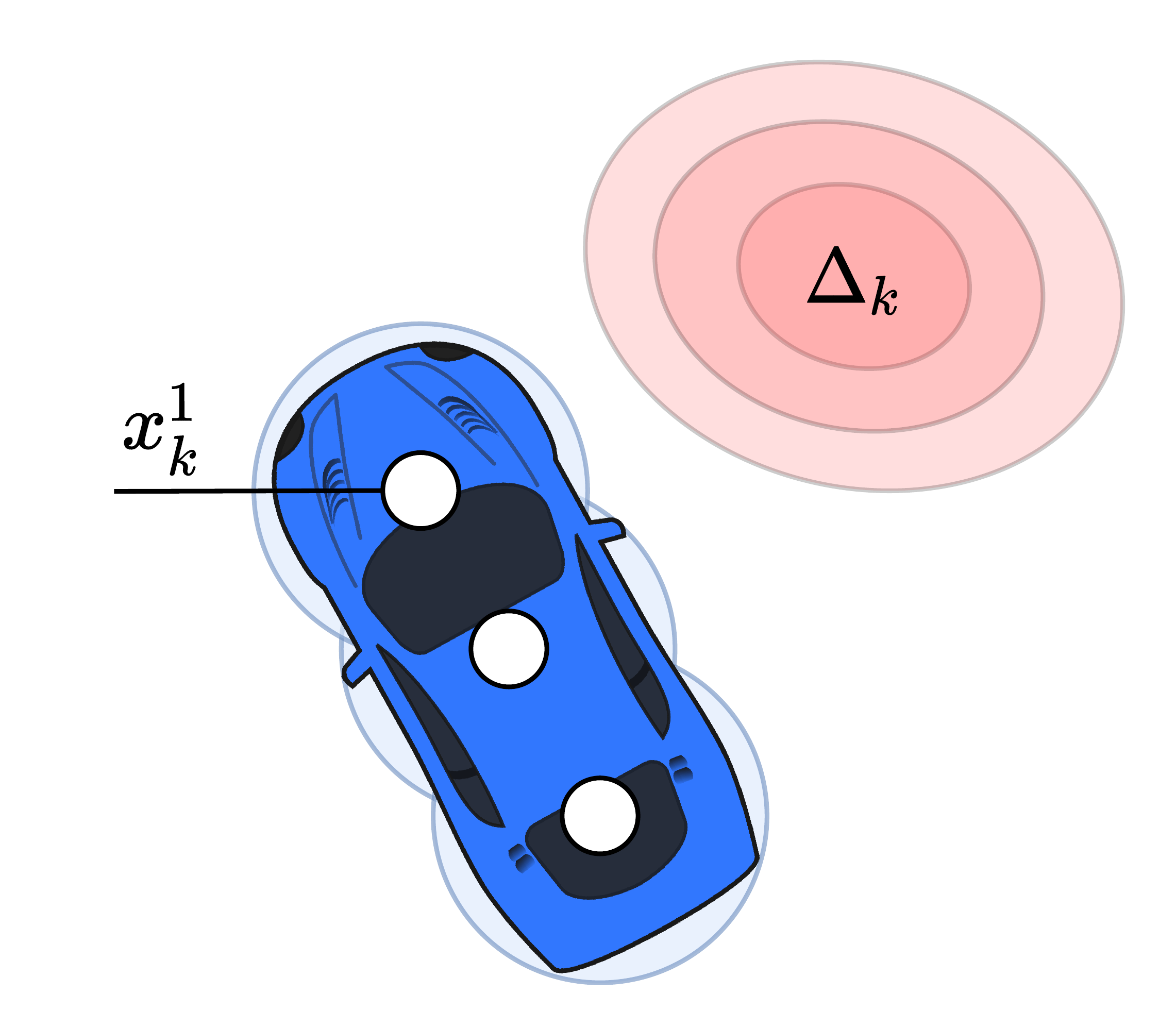}
         \caption{Original CCP \eqref{eq:cc_marginal_probability}}
         \label{fig:ccp}
     \end{subfigure}
     \
     \begin{subfigure}[b]{0.19\textwidth}
         \centering
         \includegraphics[width=\textwidth,trim = 0 0 0 104, clip]{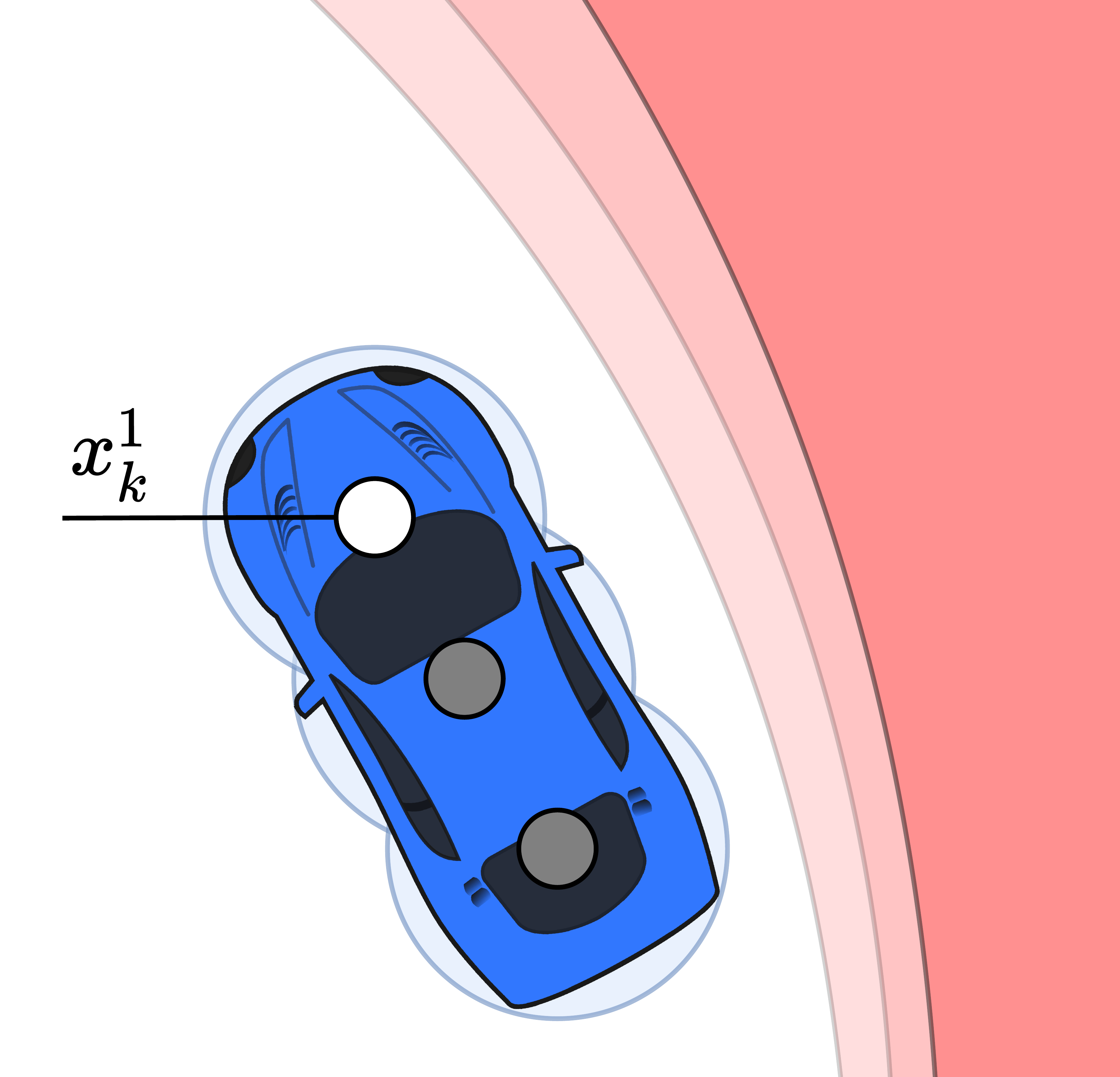}
         \caption{\h{Linearized CCP \eqref{eq:cc_linearized}}}
         \label{fig:linear_ccp}
     \end{subfigure}
     \
     \begin{subfigure}[b]{0.19\textwidth}
         \centering
         \includegraphics[width=\textwidth]{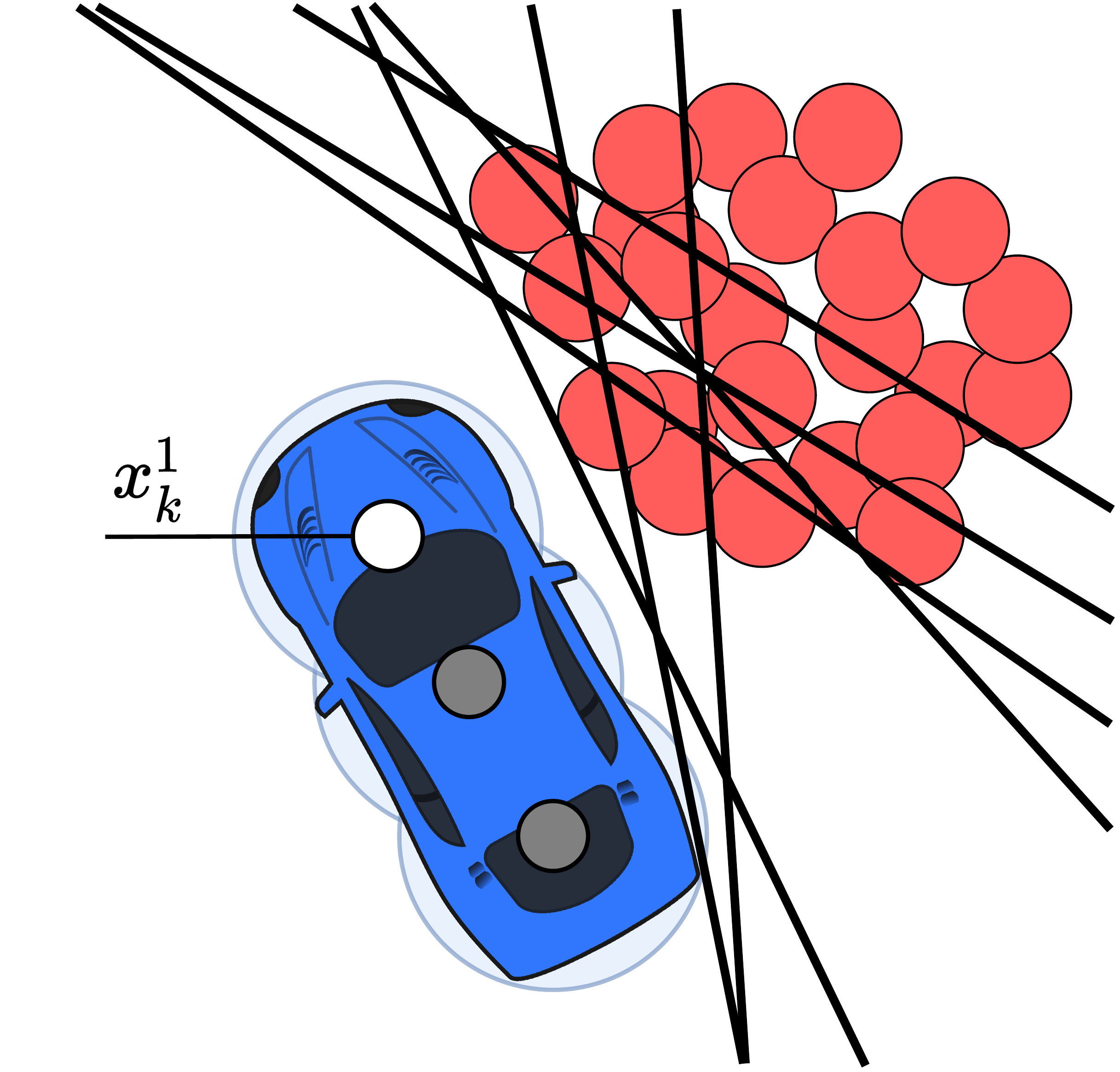}
         \caption{\h{Linear} SP \eqref{eq:scenario_halfspaces}}
         \label{fig:sp_lin}
     \end{subfigure}
     \
     \begin{subfigure}[b]{0.19\textwidth}
         \centering
         \includegraphics[width=\textwidth]{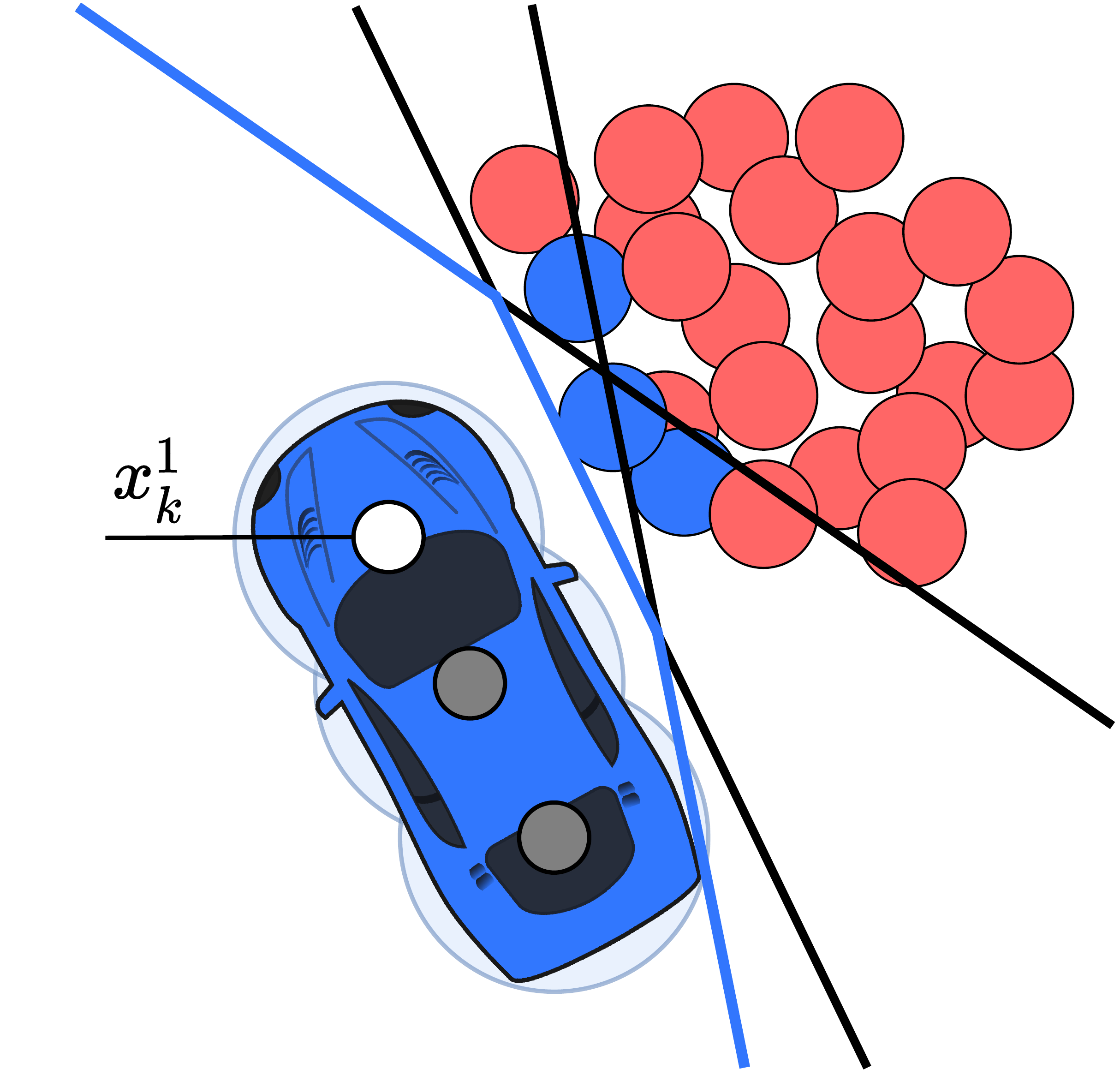}
         \caption{Pruned SP \eqref{eq:final_constraints}}
         \label{fig:sp_poly}
     \end{subfigure}
        \caption{Our approach exemplified for one robot's disc and one dynamic obstacle for a single stage. The robot and the obstacle are drawn in blue and red, respectively. Fig. \ref{fig:ccp} shows the 1 $\sigma$ to 3 $\sigma$ interval of the uncertainty in red shades. Fig. \ref{fig:linear_ccp} shows \h{the probabilistic collision region when linearized from the robot disc at the front. Fig. \ref{fig:sp_lin} shows the sampled locations in red and boundaries of the constraints in black. Fig. \ref{fig:sp_poly} shows the resulting minimal polytope in blue.}}
        \label{fig:removal_gaussian}
\end{figure*}
\setcounter{figure}{2}

\section{PROPOSED APPROACH}
\label{sec:proposed_approach}
Our method relies on the Model Predictive Contouring Control (MPCC) framework~\cite{brito_model_2019} to define the objective to optimize to plan a suitable path for the robots. Our method differs from~\cite{brito_model_2019} in the way we deal with dynamic obstacles, as detailed in the rest of the section. As such we will refer to our approach as Scenario-MPCC (S-MPCC).
To present the method, we consider a single dynamic obstacle and one of the discs used to represent the vehicle\footnote{Section~\ref{subsec:multiple_vrus} shows how this case extends linearly to multiple dynamic obstacles and multiple discs.}.
\subsection{\hu{Chance Constraints Linearized in the Robot Position}}
\label{subsec:linearize_collision_regions}
\h{Chance constraints \eqref{eq:cc_marginal_probability} are nonconvex in the robot \hu{position} when sampled (see discs in Fig.~\ref{fig:sp_lin}) \hu{and the associated SP may have many local optima and a sizable support subsample. We therefore consider a linearization of the collision regions (depicted by the lines in Fig.~\ref{fig:sp_lin}) before sampling to decrease the support subsample size of the SP. This step reduces the risk of its solution significantly. We modify the constraints as}
\begin{subequations}
\label{eq:hyperplane_definition}
\begin{align}
    &\b{A}_k = \frac{\bm{\delta}_k - \hat{\b{x}}_{k}}{||\bm{\delta}_k - \hat{\b{x}}_{k}||}, \qquad b_k = {\b{A}_k}^T(\bm{\delta}_k - \b{A}_k r),\label{eq:A_and_b}\\
   &\hu{\mathbb{P}}\hu{_k}\left[\b{A}_k^T\b{x}_k \leq b_k\right] \geq 1 - \epsilon_k, \forall k, \bm{\delta}_k \in \Delta_k, \label{eq:cc_linearized}
\end{align}
\end{subequations}
where we linearize the collision region with respect to $\hat{\b{x}}_{k}$, the $k$-step ahead prediction of the robot \hu{position}. We employ the trajectory of the previous planning cycle, forward propagated, as predictor. That is\hu{\footnote{\h{We denote by $\b{x}_{t|k}$ the $k$-step ahead prediction of the robot trajectory for the MPC planning cycle at time $t$}}}, $\hat{\b{x}}_{t|k} = \b{x}_{t-1|k+1}$ and $\hat{\b{x}}_{t|N} = \b{x}_{t-1|N}$. Hence, we search for collision-free solutions around the planned trajectory of the previous planning cycle. \hu{We show in Sec. \ref{subsec:free_space} that after linearization, the free-space of the resulting SP is convex in the robot position.} A comparison between chance constraints \eqref{eq:cc_marginal_probability} and \eqref{eq:cc_linearized} for an example is provided in Fig.~\ref{fig:linearized_ccp}. The linearized chance constraints capture less of the shape of the distribution, but are accurate near $\hat{\b{x}}_k$ and thus sufficient for motion planning. Note that the linearizations are performed for each stage of the trajectory, as illustrated in Fig.~\ref{fig:linear_ccp}.}

\subsection{Scenario Program}
\label{subsec:scenario_program}

For each of the chance constraints in \eqref{eq:cc_linearized} we construct a set of deterministic constraints by sampling from the uncertainty. The red circles in Fig.~\ref{fig:sp_lin} represent these samples \h{and the black lines are the \emph{scenarios} (Sec.~\ref{sec:preliminaries}).} The resulting SP is given by
\hu{\begin{subequations}
\begin{align}
 \min_{\b{u} \in \mathbb{U}} \qquad & \sum_{k = 1}^N J(\b{x}_k, \b{u}_k)\\
  \textrm{s.t.} \qquad & \b{x}_{k + 1} = f(\b{x}_k, \b{u}_k), \ \hu{\b{x} \in \mathbb{X}} \\
  &\b{A}_k^T(\bm{\delta}^i_k, \hat{\b{x}}_k)\b{x}_k\!\leq\!b_k(\bm{\delta}^i_k, \hat{\b{x}}_k), \forall i \in \mathcal{S}_k, \ \forall k.\label{eq:scenario_halfspaces_constraints}
\end{align}
\label{eq:scenario_halfspaces}
\end{subequations}}
The theoretic properties of SPs, discussed in Sec.~\ref{sec:preliminaries}, are limited to CCPs with one chance constraint. However, \eqref{eq:cc_linearized} describes multiple chance constraints, one for every stage of the planned trajectory. We now show that multiple chance constraints can be handled separately, resulting in a probabilistic feasibility property per stage.
\begin{theorem}\label{theory:per_stage} 
\h{Under Assumption \ref{as:independent},} the probability that the solution \hu{of SP \eqref{eq:scenario_halfspaces}} violates its associated chance constraint at stage $k$, satisfies
\begin{equation}
    \hu{\mathbb{P}^{{\hu{\textrm{S}}}_k}_k}[V_k(\b{u}_{SP}^*) > \epsilon_k(s_k^*)]\leq \beta_k(\h{S_k}),
\end{equation}
where
\begin{equation}
    \beta_k(\h{S_k}):= \sum_{s = 0}^{S_k - 1} {S_k \choose s}\left[1 - \epsilon_k(s)\right]^{S_k - s}.\label{eq:guarantee} 
\end{equation}
\end{theorem}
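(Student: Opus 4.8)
The plan is to reduce the multi-stage scenario program \eqref{eq:scenario_halfspaces}, which couples the stages through the dynamics and a single objective, to a single stage-$k$ instance of the generic nonconvex scenario program \eqref{eq:sp}, so that the confidence bound \eqref{eq:nonconvex_relation} of \cite{campi_general_2018} can be applied verbatim at each stage. The difficulty is that $\b{u}_{SP}^*$ depends on the scenarios of \emph{all} stages simultaneously, not only those of stage $k$, so $V_k(\b{u}_{SP}^*)$ is a random variable over the full product measure $\prod_{j=1}^N \mathbb{P}_j^{\textrm{S}_j}$ rather than over $\mathbb{P}_k^{\textrm{S}_k}$ alone. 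I would therefore argue by conditioning on the remaining stages.

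First I would fix the sampled scenarios at every stage $j \neq k$. Under this conditioning, the constraints \eqref{eq:scenario_halfspaces_constraints} for $j \neq k$, together with the dynamics and the input set $\mathbb{U}$, define a deterministic feasible set, and the only remaining random constraints are the $S_k$ stage-$k$ half-spaces $\b{A}_k^T(\bm{\delta}_k^i,\hat{\b{x}}_k)\b{x}_k \leq b_k(\bm{\delta}_k^i,\hat{\b{x}}_k)$. The resulting problem is exactly an instance of the scenario program \eqref{eq:sp}: the decision variable is $\b{u}$, the feasible set is the conditioned deterministic set just described, the scenario constraint is the stage-$k$ half-space, and the $S_k$ scenarios are drawn from $\Delta_k$. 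Because conditioning on the other stages only replaces random constraints by constants, this conditional program is literally the same optimization as the full SP, so its optimizer coincides with $\b{u}_{SP}^*$, its violation probability is precisely $V_k$, and its smallest support subsample size is $s_k^*$ measured over $\mathcal{S}_k$.

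The key step is to check that the conditional law of the stage-$k$ scenarios is still $\mathbb{P}_k^{\textrm{S}_k}$. This is where Assumption \ref{as:independent} enters: since $\bm{\delta}_k$ is independent of $\bm{\delta}_j$ for all $j \neq k$, conditioning on the realizations at the other stages does not alter the distribution of the stage-$k$ samples, which remain i.i.d.\ from $\mathbb{P}_k$. Consequently the hypotheses of Theorem~1 of \cite{campi_general_2018} are met for the conditional program, and \eqref{eq:nonconvex_relation} yields, for every fixed realization of the other stages,
\begin{equation}
\mathbb{P}_k^{\textrm{S}_k}\!\left[V_k(\b{u}_{SP}^*) > \epsilon_k(s_k^*) \mid \{\mathcal{S}_j\}_{j\neq k}\right] \leq \beta_k(S_k).
\end{equation}

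Finally, since the right-hand side $\beta_k(S_k)$ from \eqref{eq:guarantee} does not depend on the conditioning, I would remove the conditioning by taking the expectation over $\{\mathcal{S}_j\}_{j\neq k}$: by the tower property and the product structure of the measure granted by Assumption \ref{as:independent}, a uniform conditional bound integrates to the same unconditional bound, establishing the claim. The main obstacle throughout is the bookkeeping of this conditioning argument, namely verifying that the conditioned problem is a genuine single-constraint instance of \eqref{eq:sp} with $s_k^*$ and $V_k$ inherited unchanged, and recognizing that independence is exactly the hypothesis that keeps the conditional scenario law equal to $\mathbb{P}_k^{\textrm{S}_k}$ so that \cite{campi_general_2018} applies without modification.
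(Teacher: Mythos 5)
Your proposal is correct and follows essentially the same route as the paper's proof: condition on the scenarios of all stages $j \neq k$, use Assumption~\ref{as:independent} to keep the stage-$k$ sample law equal to $\mathbb{P}_k^{\textrm{S}_k}$, absorb the conditioned constraints into a deterministic feasible set so the problem becomes a single-chance-constraint instance of \eqref{eq:ccp}/\eqref{eq:sp} with unchanged optimizer, violation probability, and support subsample, and then apply \eqref{eq:nonconvex_relation}. Your explicit final step of integrating out the conditioning via the tower property is a slightly more careful rendering of what the paper leaves implicit, but it is the same argument.
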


\begin{proof}[Proof of Th. \ref{theory:per_stage}]
The proof follows along the lines of the convex proof \cite[Th. 4.1]{schildbach_randomized_2013}. In the following, we derive the result for $k=1$. The proof is analogous for all other $k$. We use the notation $\bm{\omega}_k = \{\bm{\delta}_k^1, \hdots, \bm{\delta}_k^{S_k}\}$ to denote the collection of all samples per stage. Consider the \hu{complement of the confidence} of the first stage, when the samples of all other stages have been drawn,
\begin{equation}
    \hu{\mathbb{P}^{{\hu{\textrm{S}}}_1}_1}[V_1(\b{u}_{SP}^*(\bm{\omega}_1)) > \epsilon_1(s_1^*)\ | \ \bm{\omega}_2, \hdots \bm{\omega}_N], \ \bm{\omega}_1 \in \Delta^{\textrm{S}_1}.
\end{equation}
\h{Under Assumption \ref{as:independent}, the samples $\bm{\omega}_1$ are drawn independently from the samples $\bm{\omega}_2, \hdots, \bm{\omega}_N$}. Moreover, since $\bm{\omega}_2, \hdots, \bm{\omega}_N$ \hu{have been observed}, we can merge their respective constraints into the feasible set
\begin{equation}
    \Tilde{\mathbb{X}}^{2:N} =  \h{\prod_{k = 2}^N} \{\b{x}_k \ | \ g(\b{x}_k, \bm{\omega}_k) \leq 0 \}.
\end{equation}
This results in the following modified optimisation problem
\begin{subequations}
\label{eq:stage_1}
\begin{align} 
    \min_{\b{u} \in \mathbb{U}} \qquad & \sum_{k = 1}^N J(\b{x}_k, \b{u}_k)\\
  \textrm{s.t.} \qquad & \b{x}_{k + 1} = f(\b{x}_k, \b{u}_k), \ \hu{\b{x} \in \mathbb{X}} \\
   &\mathbb{P}\hu{_1}[\h{g(\b{x}_1, \bm{\delta}_1) \leq 0}] \geq 1 - \epsilon_1 \\
   &\b{x}_{2:N} \in \Tilde{\mathbb{X}}^{2:N}.\label{eq:ccp_multistage_1}
\end{align}
\end{subequations}
This problem is a nonconvex CCP of the form \eqref{eq:ccp} with one chance constraint, hence we can apply \eqref{eq:nonconvex_relation}, which shows that the \hu{confidence} of the first stage satisfies the proposed theorem for $k = 1$ and analogous derivations apply for $k=2, \hdots, N$. Even though constraints \eqref{eq:ccp_multistage_1} are deterministic, the solution to the optimization problem has not changed compared to \hu{SP \eqref{eq:scenario_halfspaces}}. We therefore conclude that the result holds.
\end{proof}

\subsection{\h{Probabilistic Safety Guarantees}}
\label{subsec:free_space}
\h{The key insight that makes our approach tractable is that due to the geometric structure of the problem, the free space may be described by only a small subset of the scenarios. To see this, first note that each scenario constraint in \eqref{eq:scenario_halfspaces_constraints} defines a half-space. The collision-free space, if it exists, is formed by the intersection of half-spaces and is convex, as i) each half-space is convex and ii) the intersection of convex constraints is convex. This results in a free space polytope $\mathcal{P}_k$ (see Fig.~\ref{fig:sp_poly}), \hu{spanned} by those half-spaces that form the boundary of the polytope. We may define this subset of half-spaces by their indices as
\begin{equation*}
    \mathcal{H}_k \coloneqq \{i \ | \ \exists \b{x}_k \in \mathcal{P}_k, \b{A}_k^T(\bm{\delta}^i_k, \hat{\b{x}}_k)\b{x}_k = b_k(\bm{\delta}^i_k, \hat{\b{x}}_k) \}.
\end{equation*}
The usefulness of the set $\mathcal{H}_k$ is twofold. First, we may replace \eqref{eq:scenario_halfspaces_constraints} with only those half-spaces that span polytope $\mathcal{P}_k$, greatly reducing the size of the online optimization problem. Second, the set $\mathcal{H}_k$ contains indices of the constraints that may be active during optimization and hence the support subsample is bounded by its cardinality, that is, $s_k^* \leq |\mathcal{H}_k|$. We use the latter fact to establish the link between the CCP \hu{subject to \eqref{eq:cc_linearized} and SP \eqref{eq:scenario_halfspaces}}. There always exists an upper bound, $\bar{s}$, for the cardinality of $\mathcal{H}_k$ and for our problem we find experimentally that this upper bound $\bar{s}$ is much smaller than the sample size. That is, for uncertainty distributions where the samples are not cluttered at the boundary, only few scenarios are active.}

We can now compute sampling size $S_k$ offline, using \emph{(i)} Theorem \ref{theory:per_stage}, \emph{(ii)} upper bound $\bar{s}$, \emph{(iii)} \hu{confidence parameter} $\beta_k$, and \emph{(iv)} risk $\epsilon_k$. The \hu{SP} we solve online is given by:
\begin{subequations}
\label{eq:scenario_final}
\begin{align} 
    \min_{\b{u} \in \mathbb{U}} \qquad & \sum_{k = 1}^N J(\b{x}_k, \b{u}_k)\\
  \textrm{s.t.} \qquad & \b{x}_{k + 1} = f(\b{x}_k, \b{u}_k), \ \hu{\b{x} \in \mathbb{X}} \\
  \qquad \qquad & \h{\b{A}^T_k(\bm{\delta}^i_k, \hat{\b{x}}_k)\b{x}_k \leq b_k(\bm{\delta}^i_k, \hat{\b{x}}_k), \ i \in \mathcal{H}_k}.\label{eq:final_constraints}
\end{align}
\end{subequations}
Algorithm \ref{alg:scenario_vru} summarizes our method. Online, we sample from the distribution and identify the minimal \h{polytope} and the support subsample size (line 4-9). We then solve optimization problem \eqref{eq:scenario_final} (line 10) and use the first input as control input (line 11). 
\hu{In the following we provide a result for improving performance by discarding outlier scenarios.}

\begin{algorithm}[t]
\caption{S-MPCC}
\label{alg:scenario_vru}
\begin{algorithmic}[1]
\small{
\STATE Compute $S_k$ from $\epsilon_k$, $\bar{s}$, for all $k$

\FORALL{$t = 1, 2, \hdots$}
    \STATE  $\Delta_k^t \leftarrow $ Retrieve uncertainty from perception module
    
    \FORALL{$k = 1, \hdots, N$}
        
        \STATE Sample $\bm{\delta}_k^i \in \Delta_k^t$, $i = \{1, \hdots, S_k\}$
        \STATE \h{Compute $\b{A}_k^i, b_k^i$ from \eqref{eq:A_and_b} for all samples}
        \STATE \h{Find $\mathcal{H}_k$ and verify $|\mathcal{H}_k| \leq \bar{s}$}
    
    \ENDFOR
    \STATE $\b{u}\h{_t} \leftarrow $ Solve \eqref{eq:scenario_final}
    
    \STATE \textbf{Output: }\h{$\b{u}_{t|1}$}
   
\ENDFOR}

\end{algorithmic}
\end{algorithm}

\begin{theorem}
Consider solving the CCP \eqref{eq:ccp} using the SP \eqref{eq:sp}, where after sampling, part of the scenarios are discarded. Suppose that we have a discarding algorithm $\mathcal{R}$ that removes $R$ of the $S$ scenarios, leaving $P=S-R$ scenarios to be considered for the optimization. Let $\epsilon(s)$ be a function such that $\epsilon(P) = 1$ and
\begin{equation*}
    \beta(\h{S, P}) = {S \choose P}\sum_{s=0}^{P-1}{P \choose s}[1- \epsilon(s)]^{P-s}.
\end{equation*}
Then the probability that the solution of the SP \eqref{eq:sp} is \h{in}feasible for the original CCP \eqref{eq:ccp} satisfies the upper bound
\begin{equation}
    \h{\mathbb{P}^{\hu{\textrm{S}}}[V(\b{u}_{SP}^*) > \epsilon(s^*)]} \leq \beta(\h{S, P}). 
\end{equation}
\end{theorem}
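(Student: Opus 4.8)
The plan is to reduce the discarded problem to the no-discarding bound \eqref{eq:nonconvex_relation} by a combinatorial decomposition over all possible sets of retained scenarios, in the spirit of the convex sampling-and-discarding analysis \cite{campi_sampling-and-discarding_2011}. I would write $\bm{\omega} = (\bm{\delta}^1, \dots, \bm{\delta}^S)$ for the $S$ i.i.d. scenarios and view the discarding algorithm $\mathcal{R}$ as a measurable map $K(\bm{\omega}) \subseteq \{1, \dots, S\}$ returning the indices of the $P = S - R$ retained scenarios. For every fixed index set $I$ with $|I| = P$, let $\b{u}_I^*$ denote the optimizer of the SP built from \emph{only} the scenarios $\{\bm{\delta}^i : i \in I\}$, and let $s_I^*$ be its smallest support subsample size. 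On the event $\{K(\bm{\omega}) = I\}$ the solution that is actually computed coincides with $\b{u}_I^*$ and its support subsample with $s_I^*$, so that $\b{u}_{SP}^* = \b{u}_{K(\bm{\omega})}^*$ and $s^* = s_{K(\bm{\omega})}^*$.

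Since the events $\{K(\bm{\omega}) = I\}$, ranging over the ${S \choose P}$ subsets $I$ of size $P$, partition the sample space, I would first write the failure probability as the exact sum
\[
\mathbb{P}^{\textrm{S}}[V(\b{u}_{SP}^*) > \epsilon(s^*)] = \sum_{|I| = P} \mathbb{P}^{\textrm{S}}\!\left[\{K(\bm{\omega}) = I\} \cap \{V(\b{u}_I^*) > \epsilon(s_I^*)\}\right],
\]
and then drop the intersection with $\{K(\bm{\omega}) = I\}$ to obtain the upper bound $\sum_{|I|=P} \mathbb{P}^{\textrm{S}}[V(\b{u}_I^*) > \epsilon(s_I^*)]$. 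The crucial observation is that, for a fixed $I$, the quantities $\b{u}_I^*$, $V(\b{u}_I^*)$ and $s_I^*$ are functions of the $P$ scenarios indexed by $I$ alone. By the i.i.d. assumption the marginal law of $(\bm{\delta}^i)_{i \in I}$ is exactly the product measure $\mathbb{P}^P$ of a fresh $P$-sample, so each term equals the failure probability of an undiscarded SP on $P$ scenarios. Invoking \eqref{eq:nonconvex_relation} with $S$ replaced by $P$ — legitimate precisely because the hypothesis $\epsilon(P) = 1$ supplies the terminal condition required by that result — bounds every term by $\sum_{s=0}^{P-1}{P \choose s}[1-\epsilon(s)]^{P-s}$. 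Summing this identical bound over the ${S \choose P}$ subsets yields $\beta(S, P)$, as claimed.

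The main obstacle, and the step deserving the most care, is the passage from the joint event $\{K(\bm{\omega}) = I\} \cap \{V(\b{u}_I^*) > \epsilon(s_I^*)\}$ to the marginal fresh-sample probability. This is valid only because $\b{u}_I^*$ is defined \emph{without} reference to the discarding outcome — it is the optimizer of the SP restricted to $I$ — so that it, together with $V(\b{u}_I^*)$ and $s_I^*$, is measurable with respect to the coordinates in $I$; enlarging the event to remove the selection constraint is then a genuine over-approximation whose probability collapses to $\mathbb{P}^P$ by independence. A secondary point worth stressing is that the argument needs no structural property of $\mathcal{R}$ (such as the discarded scenarios being violated by the final solution): the bound holds uniformly over arbitrary, even adversarial, removal rules, which is exactly what the combinatorial factor ${S \choose P}$ pays for. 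I would close by recording measurability of $K$ and of the maps $\bm{\omega} \mapsto \b{u}_I^*$ as the only regularity conditions, inherited implicitly from \cite{campi_general_2018}.
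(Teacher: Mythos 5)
Your proof is correct and follows essentially the same route as the paper's: partition the sample space according to the index set retained by $\mathcal{R}$, bound each piece by the undiscarded bound \eqref{eq:nonconvex_relation} applied with $S$ replaced by $P$ (legitimate since $\epsilon(P)=1$), and pay the union-bound factor ${S \choose P}$ over all possible retained index sets. If anything, your treatment of the key step---defining $\b{u}_I^*$ from the scenarios in $I$ alone, discarding the selection event as an over-approximation, and collapsing to the marginal law $\mathbb{P}^P$ by independence---is more explicit than the paper's terse remark about the ``biased sample set,'' but the underlying argument is identical.
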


\begin{proof}
Consider the partitioning of the probability space:
\begin{equation}
    \Delta^S_{\mathcal{I}_p} = \{\bm{\delta}^S \in \Delta^S \ | \ \mathcal{R}(\bm{\delta}^S) = \mathcal{I}_p\},
\end{equation}
The sets $\Delta^S_{\mathcal{I}_p}$ are events where the picking algorithm selected the indices $\mathcal{I}_p$. \hu{Define the set where the risk bound is violated}
\begin{equation}
    \mathcal{B}_{\hu{\mathcal{I}_p}} = \{\bm{\delta}^S \!\ | \mathcal{R}(\bm{\delta}^S) = \mathcal{I}_p, V(\b{\h{u}}_{\mathcal{I}_p}^*) \!> \!\epsilon(s_{\mathcal{I}_p}^*) \}.
\end{equation}
Notice that the last condition is upper bounded by \eqref{eq:nonconvex_relation} with $S=P$. But the distribution of the samples is biased due to the samples that were removed from the iid sample set. We obtain the following bound on the biased sample set
\begin{equation}
    \mathbb{P}^{\hu{\textrm{S}}}[\mathcal{B}_{\hu{\mathcal{I}_p}}] \leq \sum_{s=0}^{P-1}{P \choose s}[1- \epsilon(s)]^{P-s}.
\end{equation}
This result holds for all index sets which also contains all the possible biases introduced by $\mathcal{R}$.  \hu{Hence, the upper bound}
\begin{align}
    \mathbb{P}^{\hu{\textrm{S}}}[\mathcal{B}] &= \hu{\mathbb{P}^{\hu{\textrm{S}}}\bigg[\bigcup_{\mathcal{I}_p} \mathcal{B}_{\mathcal{I}_p} \bigg]\leq \hu{\beta(S, P)},}\label{eq:discarding_relation}
\end{align}
\hu{is attained by independence of the samples.}
\end{proof}
\begin{remark}
Bound \hu{\eqref{eq:discarding_relation}} is conservative. For example if we pick a random discarding algorithm for $\mathcal{R}$, then the samples are still iid and we can use \eqref{eq:nonconvex_relation} \hu{directly with $S=P$, giving}
\begin{equation*}
    \sum_{s=0}^{P-1}{P \choose s}[1- \epsilon(s)]^{P-s} = \beta,
\end{equation*}
which is generally much tighter than \eqref{eq:discarding_relation}. However, even if the bound is conservative we can use it to remove extreme scenarios, leading to generally better performance.
\end{remark}

\subsection{Multiple Dynamic Obstacles and Discs}
\label{subsec:multiple_vrus}
To apply the strategy above to more than one obstacle, we use the fact that scenario optimization is distribution agnostic. We combine the predictions of the obstacles into \h{a probability space $\Delta_k = \begin{bmatrix} \Delta^0_k & \hdots & \Delta^V_k \end{bmatrix}^T$, where samples are denoted $\bm{\delta}_k = \begin{bmatrix} \bm{\delta}^0_k & \hdots & \bm{\delta}^V_k \end{bmatrix}^T$.}
Although the stacked distribution $\bm{\delta}_k$ could be used to model the correlation between the movement of obstacles, we will sample each component separately from individual probability distributions. The chance constraints \eqref{eq:cc_linearized} need to include all obstacles and are modified as follows{\thickmuskip=0.5\thickmuskip
\begin{equation*}
    \hu{\mathbb{P}_k}\left[\b{A}^T_k(\bm{\delta}^v_k, \hat{\b{x}}_k)\b{x}_k \leq b_k(\bm{\delta}^v_k, \hat{\b{x}}_k), \forall v\right] \geq 1 - \epsilon_k, \ \bm{\delta}_k \in \Delta_k, \ \forall k.
\end{equation*}}
The rest of the method follows analogously to the single obstacle approach but where the scenarios are drawn for each obstacle, resulting in more scenarios to process before obtaining the free space polytope. 
In the case of multiple vehicle discs, we formulate multiple chance constraints of the form \eqref{eq:cc_marginal_probability}, one for each collision disc. We apply the method described in this Section per disc as samples for each of the discs are independent.
\section{S-MPCC WITH GAUSSIAN UNCERTAINTIES}
\label{sec:implementation}
A common class of uncertainties are the (truncated) Gaussian uncertainties. This section presents a detailed formulation of Algorithm \ref{alg:scenario_vru}, namely Algorithm \ref{alg:detailled_alg}, one can use in the case of (truncated) Gaussian uncertainty. 

The first step of Algorithm~\ref{alg:detailled_alg} is to determine the sample size. \h{We set $\epsilon_k = 1 - 0.9889$, equivalent to the probability mass under the $3~\sigma$ interval of a bivariate Gaussian (generally considered as safe). Since the risk has logarithmic dependency on $\beta_k$~\cite{campi_general_2018}, $\beta_k$ is generally small. We pick $\beta_k=1\cdot10^{-6}$, i.e., one in a million SPs may not be feasible for the original CCP\footnote{\h{Note that the designer can choose to keep safety margin in the obstacle radius such that a failure does not have to result in a collision.}}. The removal size $R = 50$ is empirically determined, verifying that outliers are removed. Upper bound $\bar{s}$ is guessed and increased until it is never exceeded in practice. We find $\bar{s} = 20$. Evaluating \eqref{eq:discarding_relation}, we are able to pick $S_k \approx 53050$ (line 1). We note that the main dependency of the sample size is the acceptable risk $\epsilon_k$. Sampling more scenarios results in a higher probability of safety, but at the cost of more conservative trajectories and increased computation times.}
\begin{algorithm}[t]
\caption{Detailed S-MPCC for (truncated) Gaussian}
\label{alg:detailled_alg}
\begin{algorithmic}[1]\small{
\STATE Determine $S_k$ from $\epsilon_k$, $\beta_k$, $\bar{s}$, $R$
\STATE $\b{u}^i \leftarrow \mathbb{U}\times\mathbb{U}, \ \forall i = \{1, \hdots, S_k\}$ (uniform random)
\STATE $z_0^i = \sqrt{-2\ln{u^i_1}}\cos{(2\pi u^i_2)}, \ \forall i = \{1, \hdots, S_k\}$ (BMT)
\STATE $z_1^i = \sqrt{-2\ln{u^i_1}}\sin{(2\pi u^i_2)}, \ \forall i = \{1, \hdots, S_k\}$ (BMT)
\STATE Verify relevance of samples $\b{z}$, prune irrelevant
\FORALL{$t = 1, 2, \hdots$}
    
    \FORALL{$k = 1, \hdots, N$}
        \STATE $\bm{\delta}_k^i \leftarrow$ \eqref{eq:gaussian_tf}
        \STATE $\hat{\b{\delta}}_k^l \leftarrow $ apply $\mathcal{R}$ to closest $R+l$ scenarios in $\bm{\delta}_k^i$ 
        \STATE $\mathcal{P}_k \leftarrow $ intersection algorithm on $\mathcal{H}(\hat{\b{\delta}}_k^l) \bigcup \mathcal{H}_k^{\textrm{range}}$ %
    \ENDFOR
    
    
\ENDFOR
}
\end{algorithmic}
\end{algorithm}

\h{Instead of online sampling, we may sample a set of parameterized samples offline followed by an online transformation. This reduces the online operations, resulting in lower computation times. We describe this approach for the (truncated) Gaussian case.} We generate offline a number of batches with $S_k$ bivariate Gaussian samples, centered at the origin and with $\b{\Sigma} = \b{I}$, where $\b{I}$ is the identity matrix (line 2-4). These samples are obtained using the Box-Muller Transformation (BMT) \cite{muller_note_1958}, which also allows us to draw radially truncated Gaussian samples by simply changing the support domain of $u_1$ to $[e^{-\frac{r^2}{2}}, \ 1]$~\cite{martinet_efficient_2012}. Most of the samples will be in the center of the distribution and will not be relevant online. Hence, we run our online algorithm for scenario selection (explained later), offline and aggregate the set of selected scenarios. Scenarios that are not in this set are pruned offline (line 5). In the $3~\sigma$ example, approximately $95$\% of the scenarios are removed offline.

\h{Online, we are only required to transform the offline samples from the standard bivariate normal distribution to the estimated mean and variance of the uncertainty (line 8), which is computed using}
\begin{equation}
    \b{\delta}_k^i = \b{A}_k^T\b{z}_k^i + \b{\mu}_k, \quad \b{A}_k^T\b{A}_k = \b{\Sigma}_k.\label{eq:gaussian_tf}
\end{equation}
We select for each obstacle only one batch of samples. The obstacle predictions are sampled with that batch for all stages and all time steps. This provides the motion planner with consistent constraints. To further reduce the computational load, we search online only for the $l+R$ scenarios closest to considered vehicle position, where we use $l=150$ in the following experiments, we assume that this set contains the support subsample. We then apply the discarding algorithm $\mathcal{R}$, which removes the $R$ scenarios furthest from the mean of the distribution (line 9). We construct \h{half-spaces} from the remaining $l$ scenario and add four \h{half-spaces} to constrain the vehicle in a square workspace. To find the minimal polygon \h{in 2D} from this set of \h{half-spaces}, we use an intersection based algorithm (line 10). The algorithm explores the intersections in the inner polygon in a counter-clockwise fashion. The lines traveled form the minimal polygon.
In the following simulations and experiments, we incorporate our dynamic obstacle avoidance method in the MPCC framework~\cite{brito_model_2019}. We introduce a cost term that activates when the robot gets close to the boundaries of the free space polygon, to penalize movement close to pedestrians.

\begin{figure*}[t]

    \centering
    \begin{subfigure}{0.28\linewidth}
        \centering
        \includegraphics[width=\textwidth, trim = 0 0 0 0, clip]{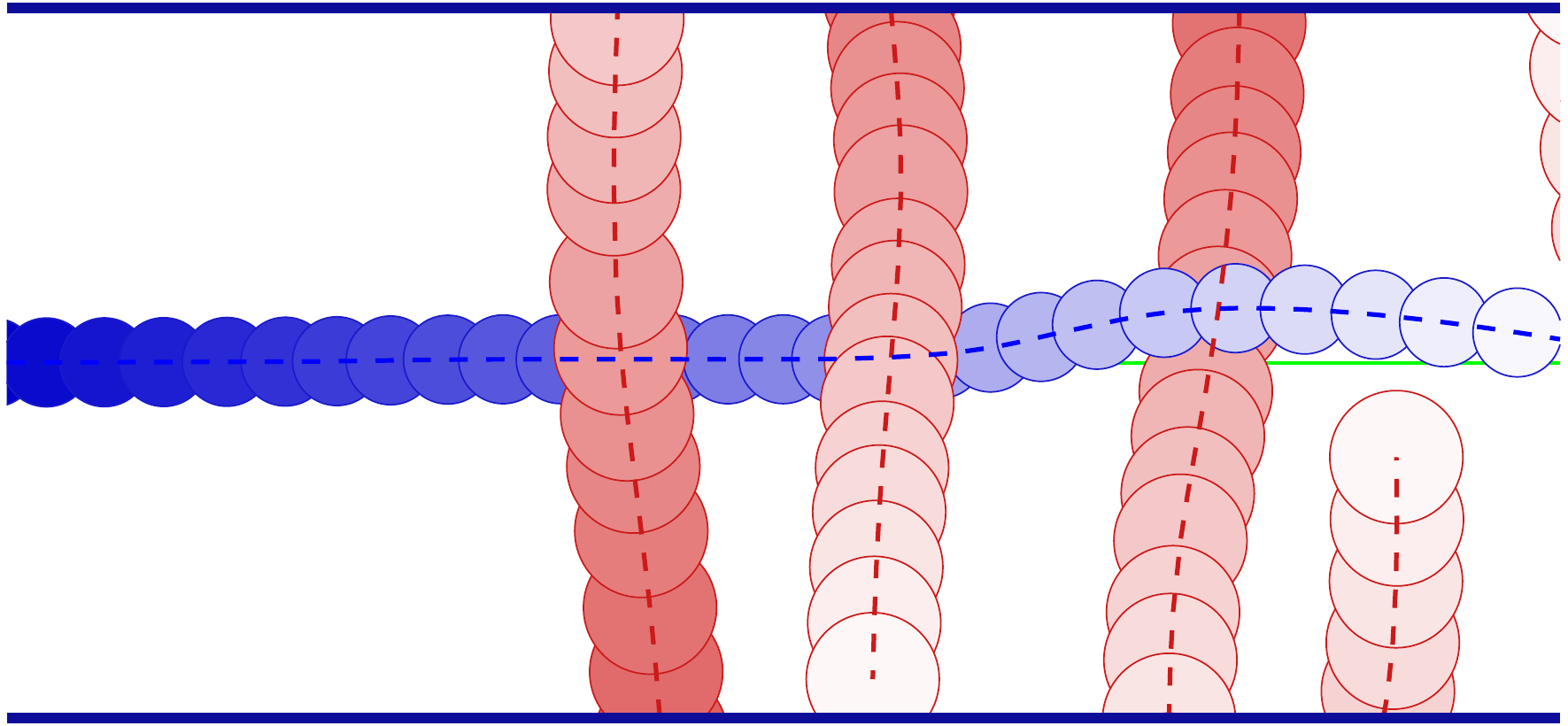}
        
        \vspace{1.5mm}
        
    	\includegraphics[width=\textwidth, trim = 0 0 0 0, clip]{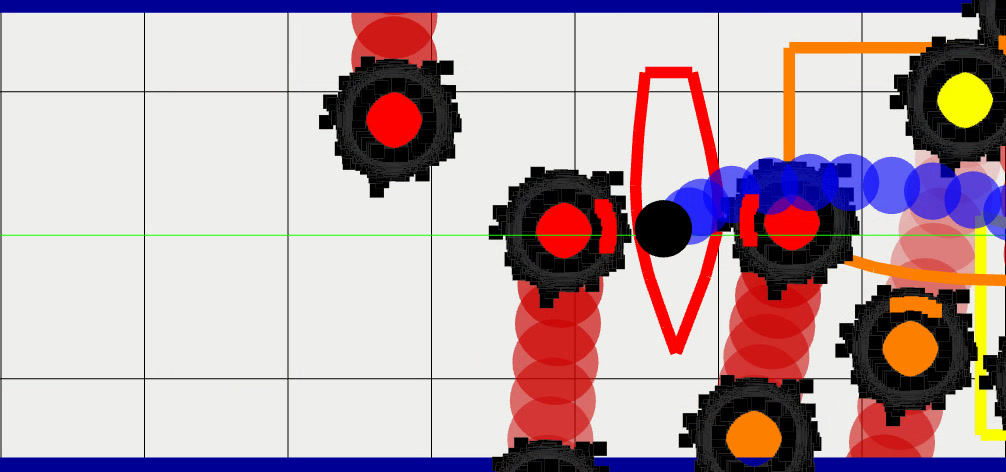}
    	\caption{\h{Gaussian}}
    	\label{fig:sim_gaussian}
    \end{subfigure}
    \hspace{2mm}
    \begin{subfigure}{0.28\linewidth}
        \centering
        \includegraphics[width=\textwidth, trim = 0 0 0 0, clip]{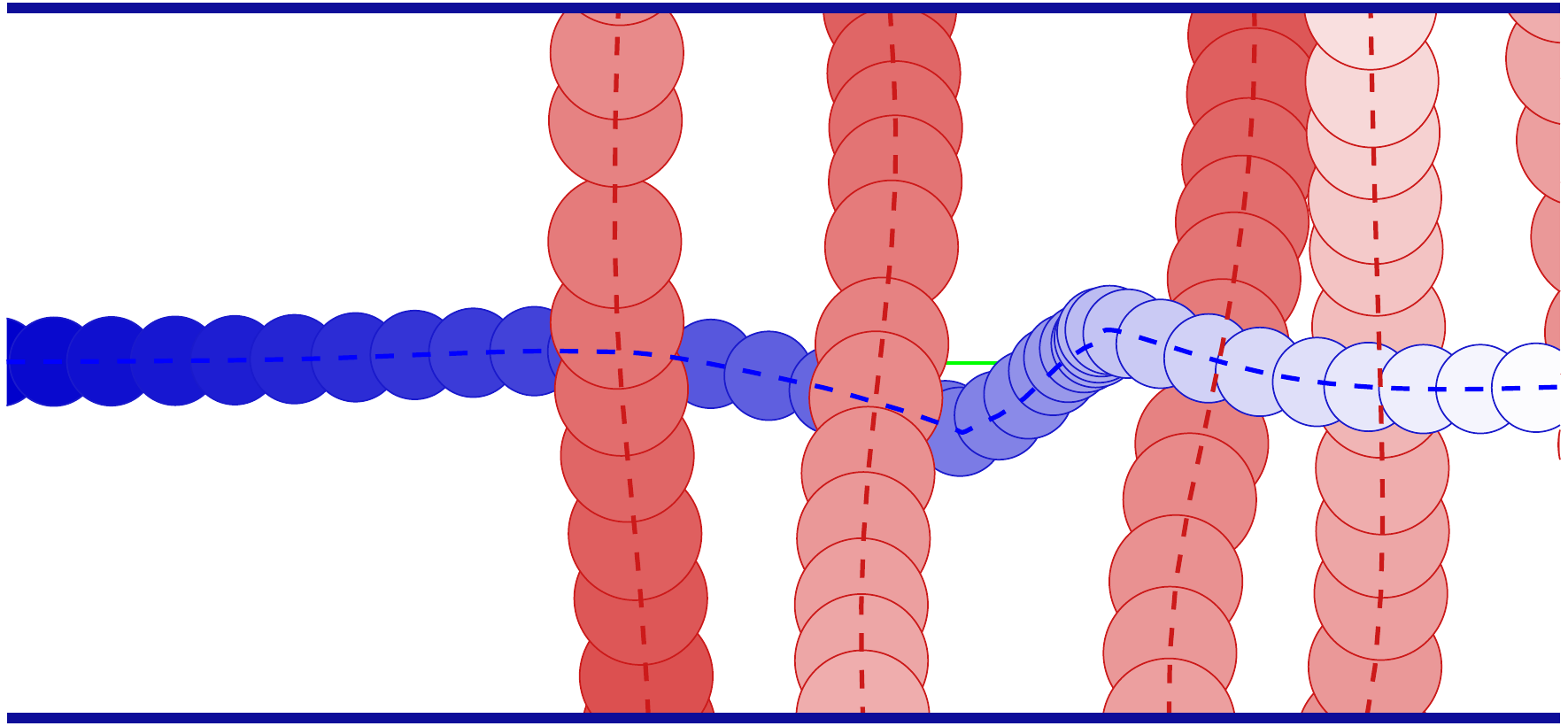}
        
        \vspace{1.5mm}
        
    	\includegraphics[width=\textwidth, trim = 0 0 0 0, clip]{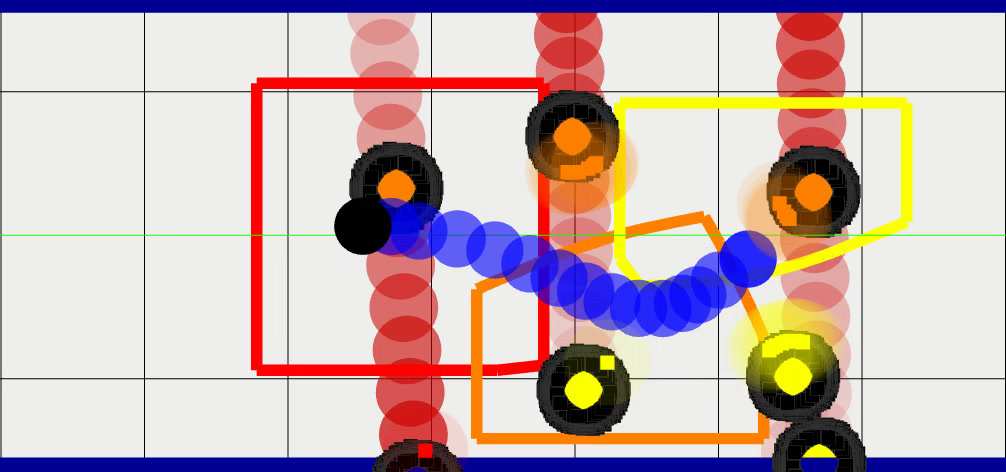}
    	\caption{\h{Radially truncated Gaussian}}
    	\label{fig:sim_nongaussian}
    \end{subfigure}
    \hspace{2mm}
    \hspace{1mm}
    \begin{subfigure}{0.28\linewidth}
        \centering
        \includegraphics[width=\textwidth, trim = 0 0 0 0, clip]{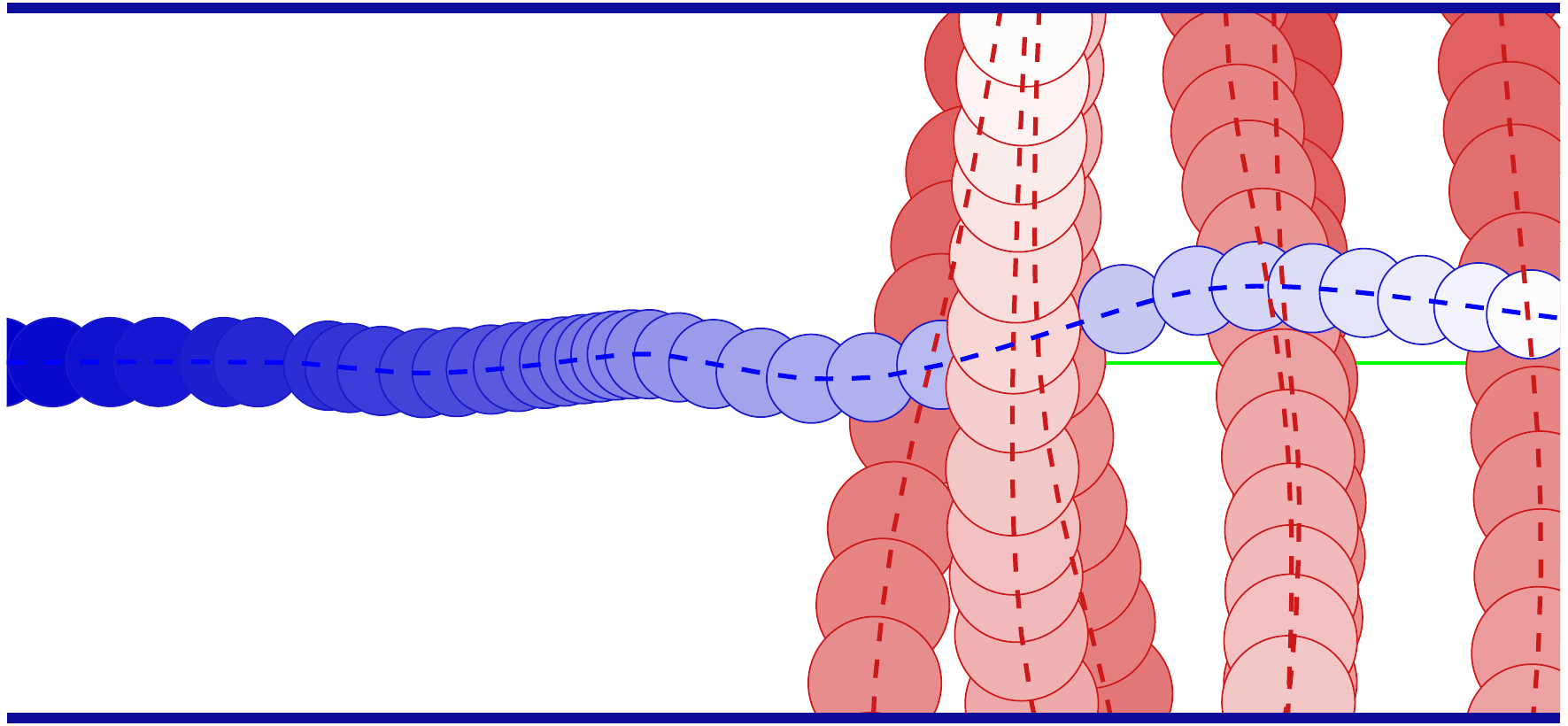}
        
        \vspace{1.5mm}
        
    	\includegraphics[width=\textwidth, trim = 0 0 0 0, clip]{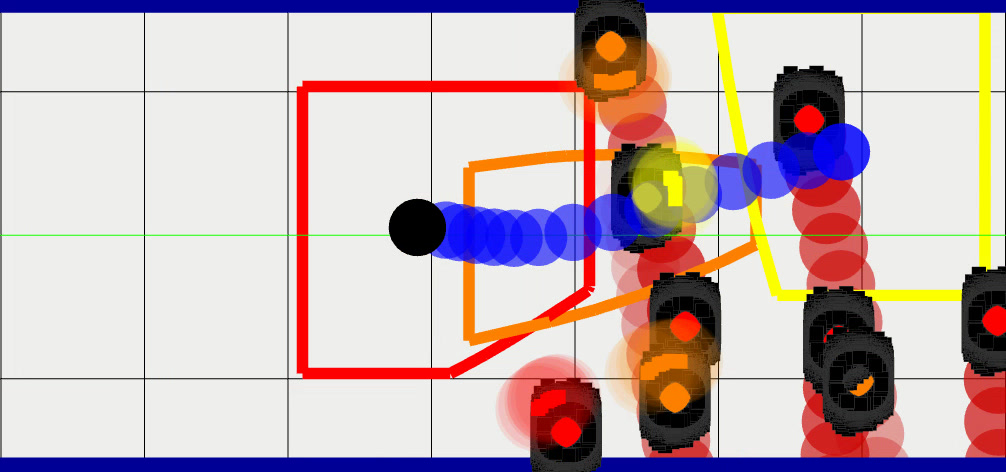}
    	\caption{\h{Width truncated Gaussian}}
    	\label{fig:sim_ng_width}
    \end{subfigure}
    \caption{\h{Simulations using our S-MPCC with 6 crossing pedestrians for 3 types of uncertainties. The top row visualizes the robot (blue) and pedestrian (red) trajectories, where newer positions are depicted with lighter shades. The bottom row visualizes the free space and active samples at stages 1, 8 and 15 in red, orange and yellow. All samples considered online are shown in black. The robot's current and predicted occupied area are denoted in black and blue, respectively.}}
    \label{fig:sim}
\end{figure*}

\begin{table*}[t]
    \centering
    \caption{Statistic results of the probability of collision \h{with respect to the estimated uncertainty for the first stage} (evaluated using \DG{Monte Carlo} sampling) and violations of the specified risk, the task completion time and the computation times. The results are collected from 100 simulations of a crossing scenario for $n \in \{2, 4, 6\}$ pedestrians. }
    \begin{tabular}{|c|c|c|c|c|c|c|c|c|c|}
        \hline Ped.  &  \multicolumn{3}{c|}{Max Collision Prob. \h{Stage 1} (\# Violations)}  & \multicolumn{3}{c|}{Time to Completion Mean (Std.) [m]} & \multicolumn{3}{c|}{Computation Time Mean (Max) [ms]}\\\hline
           & \h{CADRL} & MPCC &  S-MPCC & \h{CADRL} & MPCC & S-MPCC & \h{CADRL} & MPCC & S-MPCC~~~~~~~~~~\\\hline
        2 & \h{0.71 (12)} & 0.13 (11) & \textbf{0.00007 (0)} & \h{7.67 (0.97)} & 7.61 (0.10) & \textbf{7.14 (0.33)} & \h{3.72 (\textbf{15.49})} & \textbf{1.47} (16.48) & 6.48 (22.88) \\\hline
        4 &  \h{0.83 (17)} & 0.14 (4) &\textbf{0.00006 (0)} &  \h{7.94 (1.04)} & 8.13 (0.49) & \textbf{7.54 (0.32)} &\h{4.07 (22.29)} & \textbf{1.80} \textbf{(19.44)} & 10.32 (43.91)  \\\hline
        6 & \h{0.86 (43)} & 0.12 (13)  &  \textbf{0.00034 (0)}  & \h{8.68 (1.99)} &  8.27 (0.76)  & \textbf{7.40 \textbf{(0.45)}}  &\h{4.83 (30.31)} &   \textbf{2.12}  (\textbf{20.17}) & 18.37 (65.56) \\\hline
    \end{tabular}
    \label{tab:simulation_jackal}
\end{table*} 
\section{RESULTS}\label{sec:experiments}
In this section, we present simulation and real-world results for a mobile robot navigating among pedestrians. Moreover, we present a qualitative analysis and performance results of our method against \h{two baselines: MPCC~\cite{brito_model_2019} and Collision Avoidance with Deep RL (CADRL)~\cite{everett_motion_2018}.}
\subsection{Experimental Settings}
Our experimental platform is the Clearpath Jackal robot equipped with an Intel i5 CPU@2.6GHz. For the robot and pedestrian's localization we have used the OptiTrack system \cite{point_optitrack_2011}. Our simulations use the open-source ROS implementation of the Jackal Gazebo for the robot simulation and Social Forces model \cite{helbing_social_1995} for pedestrian simulation.

To solve SP \eqref{eq:scenario_final}, we use the ForcesPro \cite{domahidi_forces_2014} solver. The robot dynamics are described by a continuous-time second-order unicycle model~\cite{siegwart_introduction_2011}.
The model is discretized with steps of $200$ ms. The time horizon is set to $3$ seconds divided into $15$ stages. \h{The sampling period for control is $50$~ms}. 

\subsection{Simulation Results}

We compare the proposed method against \h{two methods for Gaussian uncertainties. The first is} a baseline MPCC approach~\cite{brito_model_2019} in which the ellipses used to represent the obstacles are obtained from the level sets of a known Gaussian distribution of the uncertainties. For comparison, we use the same tuning for both approaches~(the interested reader can refer to \cite{brito_model_2019} for details on the definition of the cost function and general constraints). The main difference between the two approaches \DG{is the} handling of dynamic obstacles (i.e., ellipsoidal level sets vs. scenario constraints). \h{The second method for comparison is CADRL~\cite{everett_motion_2018}. We use the open source ROS implementation in the following simulations. Similar to MPCC we employ ellipsoidal level sets as the collision region of the obstacles.}\\
The simulation environment consists of a straight road where pedestrians are crossing freely, as depicted in Fig. \ref{fig:sim}. The robot objective is to follow the centerline of the road. We evaluate our method for 2, 4 and 6 pedestrians. The uncertainty of the pedestrian predictions is Gaussian with a variance of $\bm{\Sigma} = 0.1^2\b{I}$. We set a pedestrian radius of zero. \h{Fig. \ref{fig:sim_gaussian} depicts one simulation of S-MPCC with 6 pedestrians.} Aggregated results over 100 simulations are presented in Table \ref{tab:simulation_jackal}. 
\h{In all tested cases, collisions are prevented by S-MPCC, while additionally the risk, evaluated over the perceived uncertainty, remains below the specified $3~\sigma$ threshold. The MPCC method frequently switches between locally optimal trajectories resulting in collisions when it becomes infeasible. CADRL is reactive, which in the simulated environment leads it to positions where collisions may not be avoided. This behavior becomes worse with more obstacles. Interestingly, we find that S-MPCC results in smoother trajectories than both methods which results in \hu{earlier arrival at the goal}. The downside is that the computation time of our method is higher. The computation time may be decreased by considering only the pedestrians close to the estimate $\hat{\b{x}}_k$. We repeated the simulation with $6$ pedestrians in this case. The computation time was reduced to $6.86$~ms mean and $40.94$~ms maximum.}\\
\begin{figure*}[h]
    \centering
    \begin{subfigure}{0.222\linewidth}
        \centering
    	\includegraphics[width=\textwidth, trim = 0 0 0 0, clip]{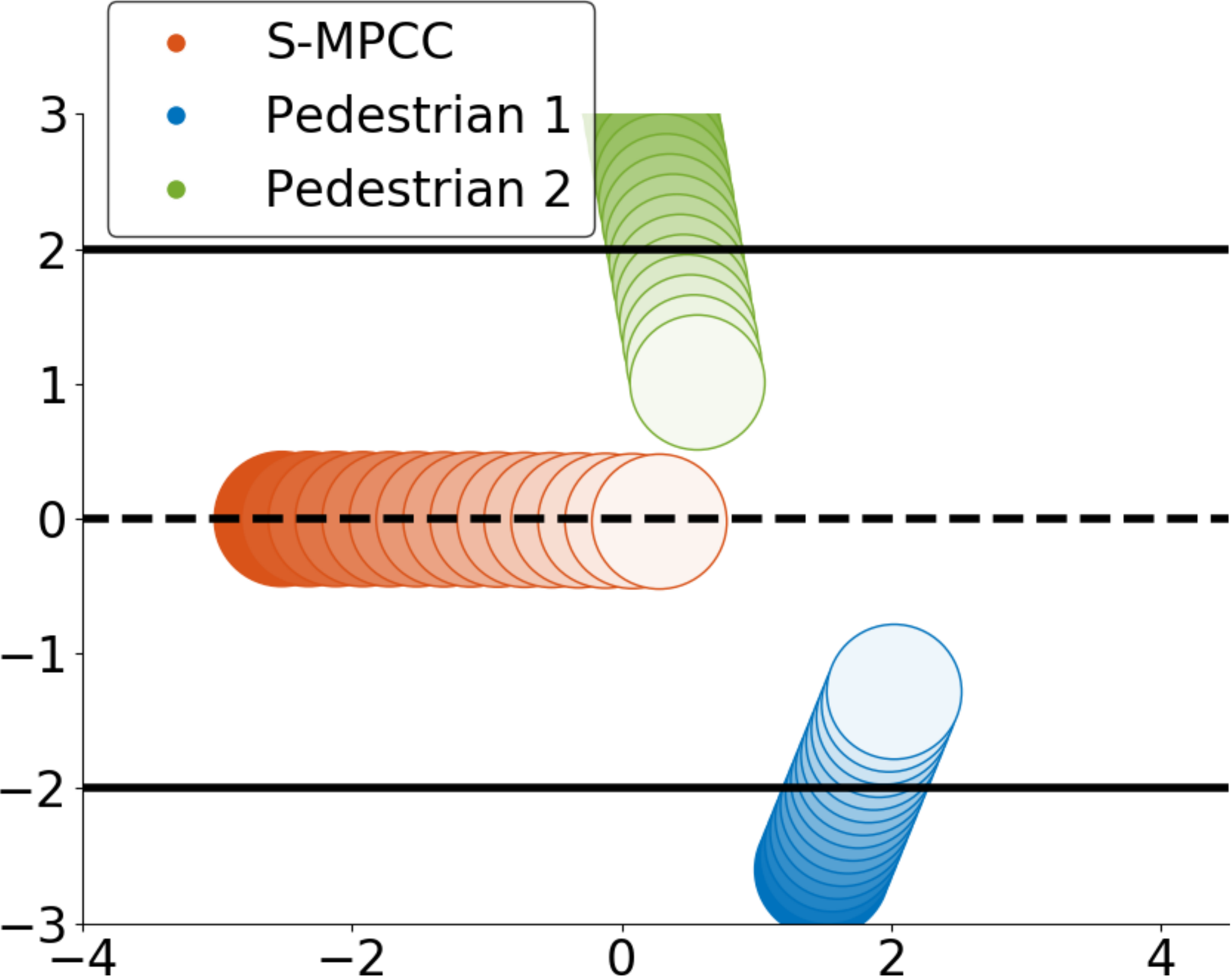}
    	\caption{$t=0$ [s]}
    	\label{fig:homo_agents1}
    \end{subfigure}%
    \hspace{1mm}
    \begin{subfigure}{0.222\linewidth}
        \centering
    	\includegraphics[width=\textwidth, trim = 0 0 0 0, clip]{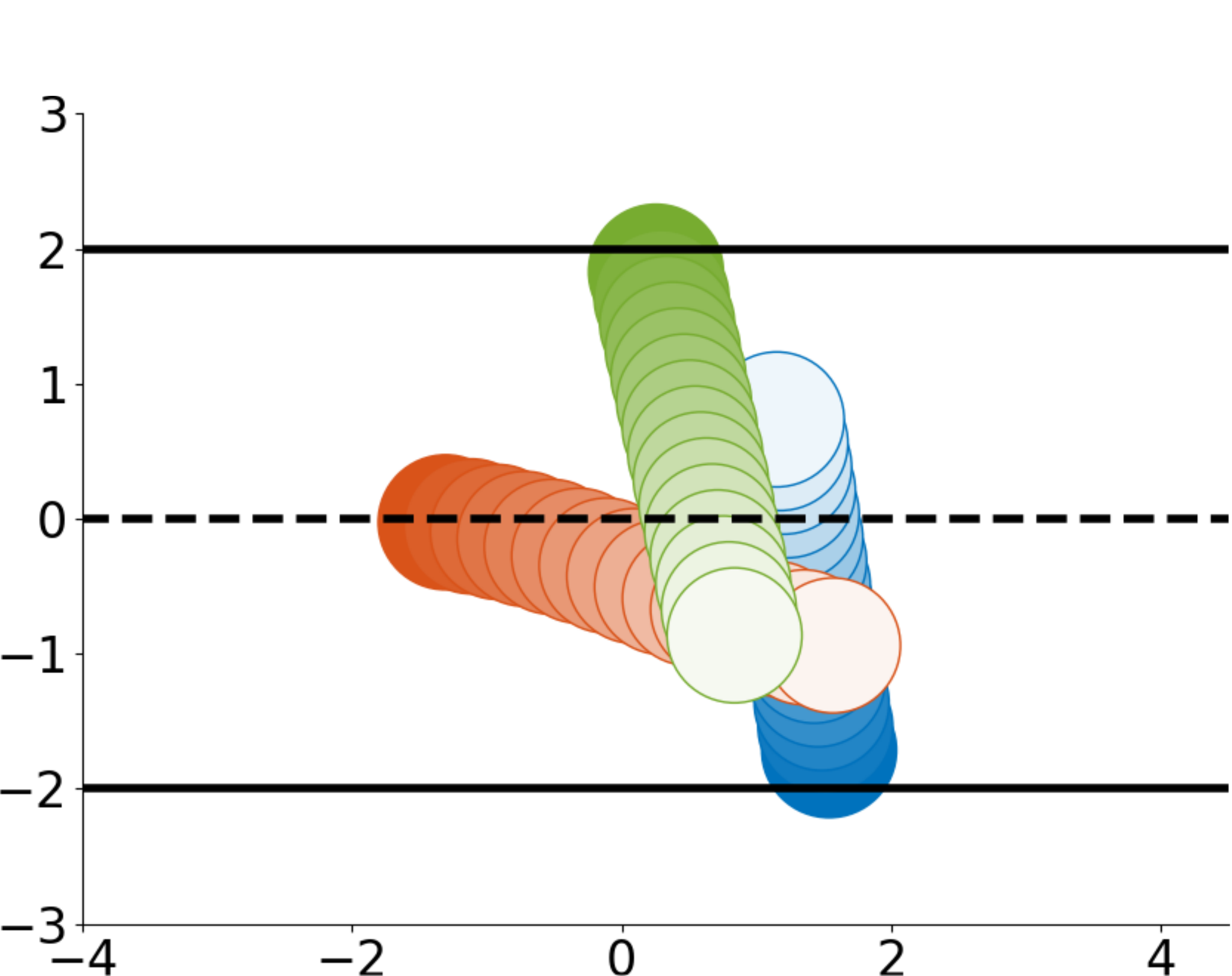}
    	\caption{$t=2.5$ [s]}
    	\label{fig:homo_agents2}
    \end{subfigure}%
    \hspace{1mm}
    \begin{subfigure}{0.222\linewidth}
        \centering
    	\includegraphics[width=\textwidth, trim =  0 0 0 0, clip]{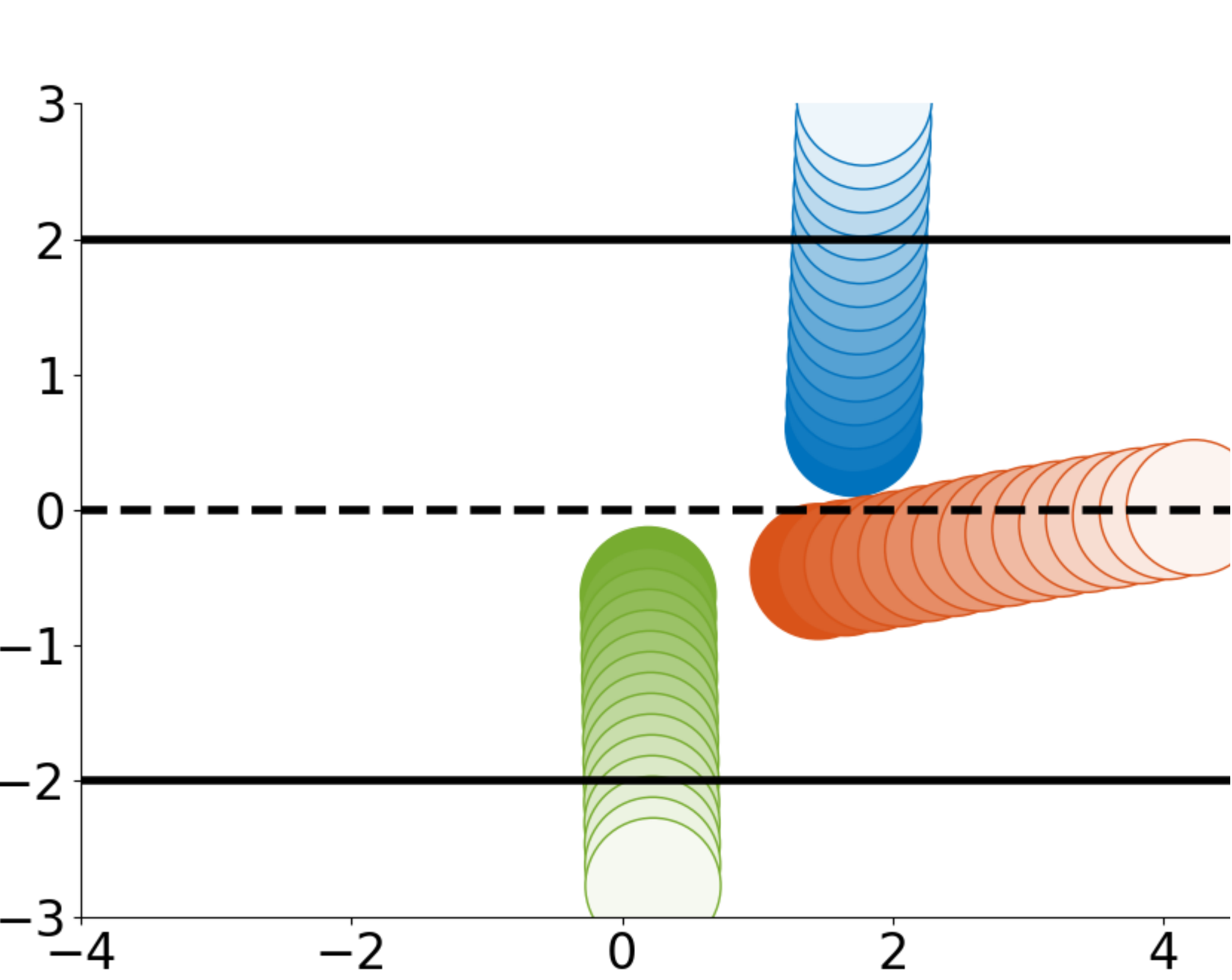}
    	\caption{$t=5$ [s]}
    	\label{fig:homo_agents3}
    \end{subfigure}%
    \vspace{1mm}
    \caption{Experimental results with the robot avoiding two crossing pedestrians. The orange circles depict the robot's plan, while the blue and green circles the pedestrians' (constant velocity) predictions. The solid black lines depict the road boundaries.}
    \label{fig:homo_agents}
\end{figure*}
Evaluation of S-MPCC for non Gaussian uncertainties is depicted in Fig. \ref{fig:sim}. Here, the previous Gaussian predictions are radially truncated at $3.5~\sigma$ (Fig. \ref{fig:sim_nongaussian}) and truncated in their width at $2.5~\sigma$ (Fig.~\ref{fig:sim_ng_width}). In this scenario, width truncated uncertainties incorporate the domain knowledge that pedestrians are expected to cross at a crosswalk. 
Level set based approaches are not applicable in this case, as the geometry of the level sets depends on the specified risk threshold. \h{We adapt the pedestrian locations to simulate a crosswalk.} In contrast to the previous simulations, we specify an obstacle radius of 0.3 m and a variance of $\b{\Sigma} = 0.08^2\b{I}$. We evaluate the probability of collision in the first stage, with respect to the estimated uncertainty over 100 tests using Monte Carlo sampling. We find a maximum risk of $0.00305$ for radial truncation and $0.02038$ for width truncation. \hu{The violation of our method in the case of width truncation corresponds to a single case where the horizon is not long enough to correctly assess the risk of the full task a priori. This leads the robot to a state where our method cannot find a trajectory that satisfies the risk bound along the horizon and the optimization becomes infeasible. By increasing the horizon, the risk can be anticipated earlier, improving feasibility at the cost of larger computation times.}
The maximum risk over the other simulations was at most~$0.0070$.
\subsection{Real-World Results}
We evaluated our method on real navigation situations with pedestrians. In the experiment, the robot navigates on a road following the lane central line when two pedestrians cross the robot's path. 
We modeled the noise on the pedestrian predictions as Gaussian distributions truncated at $3.5~\sigma$. Fig.~\ref{fig:homo_agents} provides snapshots of one experiment\footnote{A video of the experiments and simulations accompanies this paper.}.
\section{CONCLUSIONS \h{AND FUTURE WORK}}
\label{sec:conclusions}
In this paper we presented a Scenario-based Model Predictive Contouring Control (S-MPCC) \DG{method for mobile robot motion planning in the presence of dynamic obstacles with arbitrary position distributions}. The main idea was to pursue a scenario-based method (translating probabilistic constraints into deterministic ones), \hu{generating scenarios from a model of the uncertainty. By using geometry considerations we were able to prune the possible outcomes (scenarios), while providing a bound on the marginal risk with respect to the modeled probability distribution.}
\DG{We demonstrated in simulations that the proposed method outperformed \h{two} recent baselines, 
in the sense that it generated trajectories that were significantly safer and more efficient. This came at a higher processing cost, but the method is still real-time capable. We furthermore illustrated the proposed method in a real-world experiment with a moving robot platform navigating among pedestrians.} \h{To further reduce the uncertainties and improve the navigation of the robot, incorporating the interactions between robot and pedestrians would be useful.}
\hu{The risk bounds that our method provides on the modeled uncertainty can still be improved by alleviating the standing assumption that requires our uncertainty models per stage to be independent. Additionally, the risk bound on the planned trajectory is relatively conservative. A tighter bound can be useful for planning safer long term motion, especially when the robot dynamics are slow.}
\h{Alleviating these limitations are part of our future work.}

\bibliographystyle{IEEEtran}
\bibliography{references.bib}

\end{document}